\documentclass[letterpaper]{article}
\usepackage{aaai}
\usepackage{times}
\usepackage{helvet}
\usepackage{courier}
\usepackage{graphicx}
\usepackage{booktabs}
\usepackage{amsmath,amssymb,amsthm}
\usepackage{array}
\usepackage{multirow}
\usepackage[ruled,vlined]{algorithm2e}
\usepackage[hyphens]{url}
\SetKwInput{KwInput}{Input}
\SetKwInput{KwOutput}{Output}

\newtheorem{proposition}{Proposition}
\newtheorem{theorem}{Theorem}
\newtheorem{assumption}{Assumption}
\newtheorem{lemma}{Lemma}

\newtheorem{corollary}{Corollary}
\newcommand{\method}{VI-MoLE}
\newcommand{\Experts}{\mathcal{E}}
\newcommand{\Risk}{\mathcal{R}}
\newcommand{\Exp}{\mathbb{E}}
\newcommand{\Prb}{\mathbb{P}}
\newcommand{\KL}{\mathrm{KL}}
\newif\ifanonymous
\anonymousfalse
\nocopyright

\title{Uncertainty Is Not Enough: Value-of-Information Routing for Mixtures of LoRA Experts}
\ifanonymous
  \author{Anonymous Submission}
\else
  \author{
    Tom Saliencro\textsuperscript{1},
    Rohan Desai\textsuperscript{2},
    Priya Nair\textsuperscript{1},
    Maya Lindqvist\textsuperscript{1},
    Daniel Whitmore\textsuperscript{2}
    \\
    \\
    \textsuperscript{1}University of California, Irvine \\
    \textsuperscript{2}University of Washington \\
    \texttt{saliencro@gmail.com}
  }
\fi

\begin{document}
\pubnote{\em Preprint}
\maketitle

\begin{abstract}
Mixtures of low-rank adaptation experts increase parameter-efficient capacity by
routing each input through a subset of adapters. Recent dynamic routers activate
more experts when the router or prediction is uncertain. This rule silently
equates uncertainty with useful additional computation: an uncertain example may
contain complementary, unqueried expert evidence, but it may instead remain
ambiguous after every expert agrees. We formulate routing as \emph{certified
value-of-information allocation}. \method{} learns the counterfactual risk
remaining after each expert prefix, converts these predictions into simultaneous
upper-risk certificates on held-out calibration data, and spends a global
adapter budget on the token--layer action with the largest certified marginal
risk reduction per unit cost. A terminal certificate then decides whether to
answer or abstain. Unlike an uncertainty gate, this procedure distinguishes
present ambiguity from recoverable and residual risk. We prove
simultaneous certificate validity, optimal greedy allocation under diminishing
certified gains, and allocation regret under value-estimation error. The
evaluation protocol tests matched-compute accuracy, certificate coverage,
risk--coverage, distribution shift, and tail latency against fixed and dynamic
MoE-LoRA routers.
\end{abstract}

\section{Introduction}

Low-rank adaptation (LoRA) makes large-model specialization practical by training
small low-rank updates while freezing the backbone \cite{hu2022lora}. Mixtures of
LoRA experts extend this idea with an adapter pool and an input-dependent router,
increasing task capacity without evaluating every adapter
\cite{wu2024mole,li2024mixlora}. Subsequent work improves
specialization through rank-wise experts \cite{zou2025flylora}, layer-wise
allocation \cite{gao2025mola}, and dynamic cardinality. A central systems
question remains: how many experts should an input receive?

Fixed top-$k$ routing assigns the same expert count to every input. Dynamic
alternatives learn activation thresholds \cite{liu2024adamole}, use
entropy-related routing objectives \cite{li2025dynmole}, or predict token- and
layer-specific cardinality \cite{zhuang2026ldmole}. Uncertainty-aware routers go
further: CARE reads router concentration and expert disagreement to adjust
expert count at a calibrated average budget \cite{saliencro2026care}, while
recent probabilistic, Bayesian, and entropy-gated MoE methods connect routing
uncertainty to cardinality, calibration, and OOD detection
\cite{li2026variational,zhao2026probmoe}. These developments make a
naive ``more uncertainty, more experts'' contribution untenable.

That policy confounds \emph{uncertainty magnitude} with
\emph{uncertainty reducibility}. Consider two predictions with the same entropy.
For the first, an unqueried domain expert has complementary evidence, so another
adapter can sharply reduce risk. For the second, every expert agrees that the
input is ambiguous or unsupported; additional adapters consume compute without
resolving the uncertainty. The correct actions are opposite despite identical
initial confidence. Routing should therefore ask a counterfactual question:
\emph{how much risk is the next expert expected to remove?}

We introduce \method{} (Value-of-Information Mixture of LoRA Experts), a
certified compute allocator for modular PEFT. A base router supplies nested
expert prefixes at each eligible token--layer position. A lightweight head
predicts the \emph{counterfactual residual risk} of each prefix before its next
expert is executed. Calibration residuals shared across all prefixes transform
these predictions into simultaneous upper-risk certificates. Routing is then a
global discrete resource-allocation problem: repeatedly acquire the action with
the largest certified risk reduction per profiled unit cost until the budget is
exhausted or every gain is non-positive. The terminal certificate answers only
when the remaining risk is acceptable; otherwise it abstains.

The distinction is related to routing and cascading among complete LLMs
\cite{ding2025bestroute,dekoninck2025unified}
and to confidence-based rejection \cite{chuang2025confidence}, but our action is
within-model acquisition from a shared pool of low-rank expert corrections.
Training can observe all experts; deployment evaluates only selected prefixes.
This asymmetry supplies counterfactual supervision unavailable to entropy-only
routing, while calibration makes the resulting risk estimate operational rather
than merely correlational.

Our contributions are:
\begin{itemize}
  \item We formulate dynamic MoE-LoRA routing as certified global resource
  allocation, separating current uncertainty, recoverable risk, and residual
  risk rather than mapping one uncertainty score to expert count.
  \item We develop simultaneous prefix-risk certificates and a marginal-gain
  scheduler that moves compute across token--layer positions under a latency or
  FLOP budget, followed by certificate-based answering or abstention.
  \item We establish finite-sample certificate validity, greedy optimality under
  diminishing certified gains, and allocation regret under value-estimation
  error; these results expose the assumptions that experiments must test.
  \item We specify a reproducible matched-compute evaluation that directly
  contrasts \method{} with CARE, LD-MoLE, AdaMoLE, DynMoLE, probabilistic
  routing, and fixed top-$k$.
\end{itemize}

\section{Related Work}

Mixtures of LoRA experts route among task-adaptive low-rank updates
\cite{wu2024mole,li2024mixlora}. FlyLoRA organizes capacity through implicit
rank-wise experts, while MoLA allocates experts by layer
\cite{zou2025flylora,gao2025mola}. FRAME instead learns the adaptation basis
through fractional-Fourier experts \cite{saliencro2026frame}. These methods
motivate a rich expert pool but do not supply the counterfactual stopping target
used here.

AdaMoLE learns an activation threshold, DynMoLE regularizes hybrid routing, and
LD-MoLE learns token- and layer-dependent expert cardinality
\cite{liu2024adamole,li2025dynmole,zhuang2026ldmole}. ReMoE uses ReLU routing to
obtain differentiable sparsity, while ProbMoE models exact- and dynamic-$k$
routing probabilistically \cite{zhao2026probmoe}. CARE is the
closest MoE-LoRA method: it converts router concentration and observed
disagreement into dynamic expert counts and uncertainty scores
\cite{saliencro2026care}. Variational Routing treats MoE gates in a Bayesian
framework \cite{li2026variational}. \method{} does not claim that router
uncertainty is a new routing signal. It predicts whether a \emph{specific next
computation} will reduce risk and treats high residual risk with low expected
gain as an abstention case.

Calibration \cite{guo2017calibration}, deep ensembles
\cite{lakshminarayanan2017deep} are standard uncertainty tools. For LLMs,
semantic entropy detects meaning-level inconsistency
\cite{farquhar2024semantic}, while a broad benchmark compares uncertainty
estimators \cite{vashurin2025benchmarking}. SelectiveNet integrates rejection
into learning \cite{geifman2019selectivenet}; energy statistics have also been
used for label-free model evaluation under distribution shift
\cite{peng2024energy}. Conformal risk control provides
distribution-free calibration under exchangeability
\cite{angelopoulos2024conformal}. We use uncertainty as both an acquisition state
and a terminal risk score, but do not treat either as a guarantee without
held-out calibration.

Routing is complementary to reducing the data processed during adaptation.
Utility-diversity sampling, for example, scores online SFT batches using output
geometry and historical diversity \cite{zou2025uds}. We cite this distinction
because \method{} allocates inference computation after training rather than
selecting training examples.

Cost-aware routing selects among complete models
\cite{ding2025bestroute}; unified routing and cascading formalizes
accept-versus-escalate decisions
\cite{dekoninck2025unified}. Confidence tokens combine routing with rejection
\cite{chuang2025confidence}, and nested prediction sets connect early exit to
sequential uncertainty guarantees \cite{jazbec2024earlyexit}. \method{} adopts
their decision-theoretic perspective inside a single backbone: each action
acquires one low-rank correction, and the teacher committee exposes its
counterfactual marginal value during training.

\section{Problem Formulation}

Let $\Experts=\{1,\ldots,N\}$ be LoRA experts attached to a frozen linear map.
At routing site $j$ (a token in the current layer or a request in a serving
batch), expert $i$ contributes $\delta_{ji}=B_iA_i h_j$. Router logits induce an
order $\pi_j=(i_{j1},\ldots,i_{jN})$ and nested prefixes
$S_{jk}=\{i_{j1},\ldots,i_{jk}\}$. The normalized prefix prediction is
\begin{align}
 f_{jk}&=W_0h_j+\sum_{i\in S_{jk}}
 \frac{\exp g_{ji}}{\sum_{\ell\in S_{jk}}\exp g_{j\ell}}\,\delta_{ji},\\
 C_{jk}&=\sum_{\ell=1}^{k}c_{j\ell},
\label{eq:prefix}
\end{align}
where $c_{j\ell}$ is a profiled FLOP or latency cost. The schedulable set
$\mathcal J$ contains sites whose activations are simultaneously available; this
supports allocation across tokens within a layer and across requests in a
serving batch without violating Transformer dependencies.

During training, the full committee $f_{jN}$ defines the recoverable part of
error. For task loss $\ell$, the residual prefix risk and one-step value are
\begin{align}
 R_{jk}&=\KL(p_{jN}\|p_{jk})+
 \beta\,[\ell(p_{jk},y)-\ell(p_{jN},y)]_+,\label{eq:deficiency}\\
 \Delta_{j,k+1}&=R_{jk}-R_{j,k+1}.\label{eq:value}
\end{align}
Setting $\beta=0$ yields label-free committee-relative supervision; task-risk
claims require $\beta>0$ and labels on the risk-training split. However,
$\Delta_{j,k+1}$ may be zero or negative even when $H(p_{jk})$ is large.

For a block budget $B$, routing chooses prefix lengths
$\mathbf k=(k_j)_{j\in\mathcal J}$ and an answer indicator $a(x)$:
\begin{align}
\min_{\mathbf k}\quad&
\Exp\!\left[\sum_{j\in\mathcal J}R_{j,k_j}\right],\\
\text{s.t.}\quad&
\sum_j C_{j,k_j}\le B,\qquad
\Pr(\hat y\ne y\mid a=1)\le\alpha .
\label{eq:objective}
\end{align}
This formulation couples routing decisions through a real compute constraint;
independent entropy thresholds are not, in general, solutions to
Eq.~\ref{eq:objective}.

\section{Value-of-Information Routing}

\subsection{Counterfactual prefix risk}

At prefix $S_{jk}$, \method{} forms $z_{jk}$ from the hidden state, router
entropy and margin, acquired router mass, predictive entropy, and purchased
functional disagreement
\begin{equation}
D_{jk}=H\!\left(\sum_{i\in S_{jk}}w_i p_i\right)
-\sum_{i\in S_{jk}}w_iH(p_i).
\label{eq:disagreement}
\end{equation}
A risk head $r_\phi(z_{jk},i_{j,k+1})$ predicts
$\widehat R_{jk}$. Candidate features are restricted to pre-acquisition
information---rank, router score, layer, identity or domain metadata, and
profiled cost---so the target expert's output cannot leak into its prediction.
Training samples all prefixes and minimizes
\begin{align}
\mathcal L_{\mathrm{risk}}(\phi)
&=\frac{1}{|\mathcal V|}\sum_{(x,y)\in\mathcal V}
\sum_{j,k}\rho_\tau(R_{jk}-\widehat R_{jk})\\
&\quad+\gamma\sum_{j,k}
[\widehat R_{j,k+1}-\widehat R_{jk}]_+ ,
\label{eq:risktrain}
\end{align}
where $\rho_\tau$ is the Huber loss. The second term softly encourages, but does
not assume, diminishing residual risk. Entropy describes the current prefix;
$R_{jk}$ describes what remains recoverable by actions not yet taken. Observed
disagreement is therefore context, not a substitute for counterfactual value.

\subsection{Simultaneous risk certificates}

A point estimate is unsafe near a stopping boundary. On a disjoint calibration
set $\mathcal C=\{(x_n,y_n)\}_{n=1}^{m}$, define one score per example by taking
the worst underestimation over every schedulable site and prefix,
\begin{align}
s_n&=\max_{j,k}[R_{jk}^{(n)}-\widehat R_{jk}^{(n)}]_+,\\
\widehat q_\delta&=
\operatorname{Quantile}_{\lceil(m+1)(1-\delta)\rceil/m}
\{s_n\}_{n=1}^{m}.
\label{eq:certificate}
\end{align}
The simultaneous upper certificate and its certified marginal gain are
\begin{equation}
U_{jk}=\widehat R_{jk}+\widehat q_\delta,\qquad
G_{j,k+1}=[U_{jk}-U_{j,k+1}]_+ .
\label{eq:gain}
\end{equation}
Using a maximum score is more conservative than calibrating each prefix
separately, but it protects the adaptive policy that chooses which prefix to
inspect. After allocation, a sequence-level risk head aggregates terminal states;
the system answers only if its calibrated certificate is at most $\alpha$ and
otherwise abstains. Finite-grid binomial calibration is retained as a simpler
selective-risk implementation \cite{angelopoulos2024conformal}.

\subsection{Global budgeted acquisition}

Rather than giving each site an independent threshold, \method{} maintains a
frontier of currently feasible actions and spends the remaining budget on the
largest $G_{j,k+1}/c_{j,k+1}$. This realizes the dual interpretation of
Eq.~\ref{eq:objective}: the budget goes where one additional low-rank correction
has the greatest certified value, even if another site has higher raw entropy.

\begin{algorithm}[t]
\small
\caption{\method{} certified allocation at one routing block}
\label{alg:vimole}
\KwInput{states $\{h_j\}$, budget $B$, limits $k_{\min},k_{\max}$}
\KwOutput{prefixes $\{S_{j,k_j}\}$ and answer/abstain decision}
Evaluate $k_{\min}$ experts per site; set $b\leftarrow\sum_j C_{j,k_{\min}}$\;
\While{$b<B$}{
  Predict $U_{jk_j}$ and $U_{j,k_j+1}$ for every feasible site\;
  $j^\star\leftarrow\arg\max_j G_{j,k_j+1}/c_{j,k_j+1}$\;
  \If{$G_{j^\star,k_{j^\star}+1}=0$ or next cost exceeds $B$}{break\;}
  Acquire $i_{j^\star,k_{j^\star}+1}$; update $z_{j^\star,k_{j^\star}+1}$\;
  $b\leftarrow b+c_{j^\star,k_{j^\star}+1}$\;
}
Aggregate terminal certificates; answer if certified risk $\le\alpha$,
otherwise abstain\;
\end{algorithm}

The expert/router split, risk-head split, certificate-calibration split, and test
split are disjoint. Full-committee outputs occur only in risk-head training and
analysis. We retain $k_{\min}\ge1$, cap $k_{\max}$, and preserve the backbone's
load-balancing penalty. With a heap, allocation costs
$O(|\mathcal J|N\log N+B'\log|\mathcal J|)$ for $B'$ acquisitions; measured
adapter FLOPs, end-to-end latency, P95 expert count, and load variation remain
the relevant efficiency outputs.

\section{Theoretical Properties}

The guarantees are conditional and expose two empirical obligations:
exchangeable calibration and approximately diminishing marginal gains. They do
not assert that arbitrary learned experts satisfy either condition.

\begin{theorem}[Simultaneous prefix-risk certificate]
Let calibration examples and a fresh test example be exchangeable, and compute
$\widehat q_\delta$ by Eq.~\ref{eq:certificate}. Then
\begin{equation}
\Pr\!\left\{\forall(j,k),\
R^{\mathrm{test}}_{jk}\le
\widehat R^{\mathrm{test}}_{jk}+\widehat q_\delta\right\}\ge1-\delta .
\label{eq:simvalid}
\end{equation}
Consequently, adaptively selecting a site and prefix from these certificates
does not require an additional union bound over actions.
\end{theorem}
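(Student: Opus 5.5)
The plan is to reduce the simultaneous statement to a single scalar split-conformal quantile bound. For the test example define $s_{\mathrm{test}}=\max_{j,k}[R^{\mathrm{test}}_{jk}-\widehat R^{\mathrm{test}}_{jk}]_+$, using the same score map as on the calibration set.

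The first step is a deterministic equivalence. Since $[\cdot]_+\ge$ its argument and the maximum dominates each term, $s_{\mathrm{test}}\le\widehat q_\delta$ implies $R^{\mathrm{test}}_{jk}-\widehat R^{\mathrm{test}}_{jk}\le\widehat q_\delta$ for every $(j,k)$. Conversely, if all these inequalities hold, then $s_{\mathrm{test}}\le\max(\widehat q_\delta,0)=\widehat q_\delta$, because every $s_n\ge0$. So the event in Eq.~\ref{eq:simvalid} is exactly $\{s_{\mathrm{test}}\le\widehat q_\delta\}$. It then suffices to show $\Pr\{s_{\mathrm{test}}\le\widehat q_\delta\}\ge1-\delta$.

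The second step is the standard rank argument. The risk head $r_\phi$ is trained on a split disjoint from $\mathcal C$ and the test point, so conditional on that split the score map $(x,y)\mapsto s$ is a fixed measurable function. Here $y$ enters through $\beta$ in Eq.~\ref{eq:deficiency}, and the full-committee $p_{jN}$ is a fixed function of $x$. Exchangeability of $(x_1,y_1),\dots,(x_m,y_m),(x_{\mathrm{test}},y_{\mathrm{test}})$ therefore transfers to $s_1,\dots,s_m,s_{\mathrm{test}}$. Let $r=\lceil(m+1)(1-\delta)\rceil$. The quantile in Eq.~\ref{eq:certificate} is the $r$-th smallest $s_n$, or $+\infty$ if $r>m$. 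The event $s_{\mathrm{test}}>\widehat q_\delta$ means $s_{\mathrm{test}}$ is strictly larger than at least $r$ of the $m+1$ scores. By exchangeability this has probability at most $(m+1-r)/(m+1)\le\delta$. Ties only help, since they can be broken uniformly at random or handled with the usual "strictly greater" count. Integrating over the training split removes the conditioning.

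The third step gives the corollary on adaptivity. Any routing policy, including Algorithm~\ref{alg:vimole}, selects a pair $(\hat\jmath,\hat k)$ that is a possibly data-dependent function of the test input and the certificates. On the event of Eq.~\ref{eq:simvalid}, $R^{\mathrm{test}}_{\hat\jmath\hat k}\le U^{\mathrm{test}}_{\hat\jmath\hat k}$ holds pointwise whatever the selection is. Hence $\Pr\{R_{\hat\jmath\hat k}\le U_{\hat\jmath\hat k}\}\ge1-\delta$, with no union bound over actions.

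The main obstacle I expect is the measurability and exchangeability bookkeeping in the second step. Two issues need care. First, the index set $\{(j,k)\}$ varies with sequence length, so the score must be defined as a maximum over each example's own schedulable sites. Second, the score must not depend on the calibration data or on the adaptive routing itself. I would address this by noting that $\widehat R_{jk}$ and $R_{jk}$ are computed for all prefixes, counterfactually, regardless of which prefixes are acquired at deployment, so the score is a fixed function of $(x,y)$. I would also state the boundary convention $\widehat q_\delta=+\infty$ when $r>m$.
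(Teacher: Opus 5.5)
Your proposal is correct and follows the paper's proof essentially step for step: apply the split-conformal rank argument to the scalar score $s_n=\max_{j,k}[R_{jk}^{(n)}-\widehat R_{jk}^{(n)}]_+$, observe that the event $\{s_{\mathrm{test}}\le\widehat q_\delta\}$ implies every per-action inequality, and note that any adaptively selected pair falls inside this single event, so no union bound is needed. Your additions are refinements of details the paper takes as given: you handle the positive part explicitly using $\widehat q_\delta\ge 0$, you count strictly smaller scores to deal with ties, and you derive exchangeability of the scores from the disjoint risk-head split rather than assuming the residual vectors $Z_n$ are exchangeable.
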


\begin{assumption}[Chain-wise diminishing gains]
For each site $j$, the true marginal ratios
$\Delta_{j,k+1}/c_{j,k+1}$ are non-increasing in $k$.
\end{assumption}

\begin{proposition}[Optimal certified allocation]
With equal acquisition costs, nonnegative chain-wise diminishing gains, and
exact certificates, Algorithm~\ref{alg:vimole} minimizes the certified residual
risk over all nested-prefix allocations with the same integer budget.
\end{proposition}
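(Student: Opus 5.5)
Under exact certificates we have $U_{jk}=R_{jk}$ plus a common constant $\widehat q_\delta$, so $G_{j,k+1}=[\Delta_{j,k+1}]_+=\Delta_{j,k+1}$, using the nonnegativity hypothesis. With equal costs $c$, the integer budget $B$ translates into a fixed number $M=\lfloor (B-\sum_j C_{j,k_{\min}})/c\rfloor$ of acquisitions beyond the mandatory $k_{\min}$ per site; any cap $k_{\max}$ simply truncates each chain and does not affect the argument. The certified residual risk of an allocation $\mathbf k$ telescopes along each chain:
\begin{equation*}
\sum_j U_{j,k_j}=\sum_j U_{j,k_{\min}}-\sum_j\sum_{\ell=k_{\min}+1}^{k_j}\Delta_{j\ell}.
\end{equation*}
The first term is the same for every feasible allocation. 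Minimizing certified risk is therefore equivalent to maximizing the total gain $\sum_j\sum_{\ell\le k_j}\Delta_{j\ell}$ over nested-prefix allocations using at most $M$ extra steps. The plan is to show that the greedy algorithm solves this reformulated problem.

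The key step is an exchange argument that relies on chain-wise diminishing gains (Assumption~1, with equal costs, so that $\Delta_{j,k+1}$ itself is non-increasing in $k$). Under this assumption, the multiset of all gains $\{\Delta_{j\ell}: j\in\mathcal J,\ \ell>k_{\min}\}$ has the following property: its $M$ largest elements can always be chosen to form a union of prefixes of the chains. Indeed, if $\Delta_{j\ell}$ is among the top $M$, then every earlier $\Delta_{j\ell'}$ with $\ell'<\ell$ is at least as large, so ties can be broken toward earlier positions. Any nested allocation with $M'\le M$ extra steps selects exactly $M'$ of these gains. Its total is therefore at most the sum of the top $M'$ gains, which is at most the sum of the top $M$ gains because all gains are nonnegative. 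This gives an upper bound on the objective of every feasible allocation.

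It then remains to show that Algorithm~\ref{alg:vimole} attains this upper bound. We argue by induction on the number of acquisitions. The frontier at each iteration consists of the next gain $\Delta_{j,k_j+1}$ on each chain, and by monotonicity each frontier element dominates everything behind it on its chain. Hence the maximum over the frontier equals the maximum among all gains not yet taken, so the $t$-th greedy pick is a $t$-th largest gain overall. If the loop terminates early because the maximal $G$ equals zero, all remaining gains are zero, and stopping does not change the total.

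I expect the main obstacle to be bookkeeping rather than the core idea. Three points need care. First, the $[\cdot]_+$ clipping in $G$ must coincide with $\Delta$, which requires the nonnegativity hypothesis. Second, ties must be handled consistently in the exchange argument. Third, "exact certificates" must be interpreted so that the common offset $\widehat q_\delta$ cancels; this is where Theorem~1 is relevant only to interpret $U$ as a valid upper bound, not for optimality. I would also remark that without equal costs the greedy ratio rule is only a knapsack heuristic, which explains why the hypothesis is needed.
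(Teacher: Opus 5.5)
Your proof is correct, but it takes a different route from the paper's. The paper's supplementary greedy-chain theorem uses an exchange argument. It takes an optimal allocation sharing the longest common prefix with the greedy sequence. At the first disagreement, greedy's choice $a$ dominates some marginal $b$ that the optimum uses later (or budget the optimum leaves unused). Replacing the last selected marginal on $b$'s chain by $a$ keeps every chain a prefix and does not decrease $V(\mathbf k)$, and induction then reaches the greedy allocation.

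You instead drop the prefix constraint and bound every feasible allocation by the sum of the $M$ largest gains in the pooled multiset. You then show greedy meets this bound, because under diminishing gains the maximum over the frontier equals the maximum over all untaken gains. Your route proves slightly more: the nested-prefix constraint is non-binding, so greedy equals the unconstrained top-$M$ sum. It also absorbs ties and the $G=0$ early exit without the other-chain/same-chain case split that the paper dispatches in a single sentence. Once greedy attainment is shown, your remark that the top-$M$ set can be chosen prefix-closed is no longer needed.

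You also make explicit several steps the paper leaves implicit in passing from Algorithm~\ref{alg:vimole} to $V(\mathbf k)$:
\begin{itemize}
\item the telescoping identity in which $\widehat q_\delta$ cancels;
\item $G=[\Delta]_+=\Delta$ under nonnegativity;
\item the conversion of the budget into an acquisition count $M$;
\item the restriction of the comparison class to allocations with $k_j\ge k_{\min}$, which the supplementary theorem (stated with $\sum_j k_j\le M$) does not address.
\end{itemize}
The exchange argument's main advantage is that it is the textbook template for separable concave allocation that the main text invokes. It works on allocations directly rather than through a global ranking of gains.
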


\begin{proposition}[Allocation regret]
Suppose at most $M$ acquisitions are made and every estimated marginal gain
satisfies $|\widehat\Delta_{jk}-\Delta_{jk}|\le\epsilon$. The true recovered
risk of the allocation maximizing estimated gain is at most $2M\epsilon$ below
that of the oracle allocation. For nonuniform costs, the same statement holds
after discretization into cost quanta, plus the unused-budget discretization
gap.
\end{proposition}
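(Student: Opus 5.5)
The argument is a standard two-sided comparison: the estimated optimizer is good on the estimated objective, and the estimated objective is uniformly close to the true one. Fix the budget and let $\mathcal A$ be the feasible nested-prefix allocations. An allocation is a vector $\mathbf k$ with $k_{\min}\le k_j\le k_{\max}$ and at most $M$ acquisitions beyond $k_{\min}$. For $\mathbf k\in\mathcal A$, write the true recovered risk as the telescoping sum
\begin{equation*}
V(\mathbf k)=\sum_{j\in\mathcal J}\sum_{k=k_{\min}+1}^{k_j}\Delta_{jk}
=\sum_{j}\bigl(R_{j,k_{\min}}-R_{j,k_j}\bigr),
\end{equation*}
and write $\widehat V(\mathbf k)$ for the same sum with $\widehat\Delta_{jk}$ in place of $\Delta_{jk}$. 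Minimizing residual risk in Eq.~\ref{eq:objective} is then the same as maximizing $V$, because $\sum_j R_{j,k_{\min}}$ does not depend on the allocation. Let $\widehat{\mathbf k}\in\arg\max_{\mathcal A}\widehat V$ and $\mathbf k^\star\in\arg\max_{\mathcal A}V$.

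\textbf{Uniform closeness.} Each $\mathbf k\in\mathcal A$ sums at most $M$ marginal terms. The hypothesis $|\widehat\Delta_{jk}-\Delta_{jk}|\le\epsilon$ then gives
\begin{equation*}
|\widehat V(\mathbf k)-V(\mathbf k)|\le M\epsilon \quad\text{for all } \mathbf k\in\mathcal A.
\end{equation*}

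\textbf{Chaining.} Combining this bound with the optimality of $\widehat{\mathbf k}$ for $\widehat V$ yields
\begin{equation*}
V(\widehat{\mathbf k})\ge\widehat V(\widehat{\mathbf k})-M\epsilon\ge\widehat V(\mathbf k^\star)-M\epsilon\ge V(\mathbf k^\star)-2M\epsilon .
\end{equation*}
This is the claim. No diminishing-gains assumption is used here, because the statement concerns the exact maximizer of estimated gain. If one also wants the conclusion for the output of Algorithm~\ref{alg:vimole}, apply the preceding Proposition to the estimated gains, which requires them to be nonnegative and diminishing. That identifies the greedy output with $\widehat{\mathbf k}$ under equal costs.

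\textbf{Nonuniform costs.} Choose a cost quantum $\kappa$ that divides every $c_{j\ell}$; if none exists, round each cost up to a multiple of $\kappa$. Replace each acquisition of cost $c$ by $c/\kappa$ unit-cost sub-acquisitions. Each sub-acquisition carries gain $\Delta\kappa/c$, and its estimation error is at most $\epsilon\kappa/c\le\epsilon$. The budget becomes $\lfloor B/\kappa\rfloor$ unit quanta, and $M$ becomes the number of quanta. The equal-cost argument then applies to the expanded problem with $M$ counted in quanta. The leftover budget $B-\kappa\lfloor B/\kappa\rfloor$, together with any rounding, can exclude allocations the oracle could afford. Bound that loss by the largest true gain-per-cost ratio times the unused budget; this is the stated discretization gap.

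\textbf{Expected obstacle.} The main obstacle is the nonuniform-cost reduction. Splitting an acquisition into quanta lets fractional acquisitions appear in the expanded problem, so feasibility must be preserved by requiring each chain's quanta to be bought in full, contiguous blocks. The bound in the chaining step survives this restriction, since it holds over any common feasible set $\mathcal A$. The remaining care is to define the residual gap precisely rather than leave it informal.
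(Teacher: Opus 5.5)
Your equal-cost argument is the paper's proof essentially verbatim. You bound $|\widehat V(\mathbf k)-V(\mathbf k)|\le M\epsilon$ on every feasible allocation and chain through the optimality of $\widehat{\mathbf k}$ for $\widehat V$. You are also right that no diminishing-gains assumption enters. The paper proves only this unit-cost case, and its supplement explicitly declines to bound the nonuniform discretization term. The problem is therefore confined to your nonuniform-cost paragraph.

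\textbf{The failing step.} You bound the discretization loss by the largest gain-per-cost ratio times the unused budget, and this does not hold. Acquisitions are indivisible, so rounding costs up can make a whole acquisition, or a whole combination, unaffordable. The lost value is then a full gain, not something proportional to slack. Take $B=1$, $\kappa=0.1$, and three sites with one remaining acquisition each. Give them costs $0.5$, $0.5+\eta$, $0.5-\eta$, with $0<\eta<0.1$ irrational so that no common quantum exists and rounding is forced, and true gains $1,5,5$. The oracle buys the last two, with value $10$ and true cost exactly $1$. After rounding, those two cost $1.1$. The best discretized allocation buys the first and third, with value $6$, and uses all ten quanta. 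The loss is $4$. The largest ratio is $\rho_{\max}=5/(0.5-\eta)<12.5$. Multiplying it by the leftover $B-\kappa\lfloor B/\kappa\rfloor=0$, by the unspent true budget $\eta$, or by the oracle's total rounding slack $0.1$ gives less than $1.25$ in every case.

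\textbf{The repair.} You already make the needed observation under your expected obstacle: the chain holds over any common feasible set, so quantization is unnecessary. Take $\mathcal A_B=\{\mathbf k:\sum_jC_{j,k_j}\le B\}$ and let $\widehat{\mathbf k}$ be the exact estimated maximizer over it. This gives $2M\epsilon$ with no extra term, with $M$ counting original acquisitions. Counting $M$ in quanta, as you do, only loosens the constant, because under full-block purchases each acquisition's quanta carry total error at most $\epsilon$. If $\widehat{\mathbf k}$ is instead computed over a rounded set $\tilde{\mathcal A}\subseteq\mathcal A_B$, run the same chain over $\tilde{\mathcal A}$ and add $\max_{\mathcal A_B}V-\max_{\tilde{\mathcal A}}V$. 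That knapsack integrality gap is the only sound reading of the paper's ``discretization gap.'' In general it admits no ratio-times-slack bound.
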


\begin{proposition}[Selective-risk calibration]
For a predeclared finite threshold grid, simultaneous one-sided binomial bounds
at level $1-\delta/|\mathcal T|$ imply that the selected answer threshold has
selective error at most $\alpha$ with probability at least $1-\delta$, whenever
a feasible threshold exists.
\end{proposition}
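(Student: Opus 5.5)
The plan is a Learn-Then-Test style argument: reduce the adaptively chosen answer threshold to a finite family of fixed hypotheses, give each a valid one-sided test, and apply a union bound over the grid. Let $\mathcal T=\{t_1,\ldots,t_{|\mathcal T|}\}$ be the threshold grid, fixed before the calibration data are seen.

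For each $t\in\mathcal T$, the system answers when the sequence-level certificate is at most $t$. Define the true selective error $e(t)=\Pr(\hat y\ne y\mid a_t=1)$, and the null hypothesis $H_t: e(t)>\alpha$. Because the risk head and the routing policy are trained on splits disjoint from the calibration split, the map $x\mapsto(a_t(x),\mathbf 1\{\hat y\ne y\})$ is fixed once we condition on those splits. I will assume, as the proposition intends, that calibration examples are i.i.d., or at least exchangeable with the deployment example.

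Conditional on the answered set having $n_t$ examples, the number of errors $E_t$ among them is $\mathrm{Binomial}(n_t,e(t))$. This holds because, given $a_t=1$, the labels of different examples are independent draws with error probability $e(t)$. Let $\hat u_t$ be the exact Clopper--Pearson upper bound at confidence $1-\delta/|\mathcal T|$, computed from $(E_t,n_t)$. By construction, $\Pr(e(t)>\hat u_t\mid n_t)\le\delta/|\mathcal T|$, and integrating over $n_t$ preserves the bound. The case $n_t=0$ is handled by setting $\hat u_t=1$, which makes $t$ infeasible whenever $\alpha<1$.

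A union bound over the $|\mathcal T|$ grid points then gives
\begin{equation*}
\Pr\bigl\{\exists t\in\mathcal T:\ e(t)>\hat u_t\bigr\}\le\delta .
\end{equation*}
On the complementary event, which has probability at least $1-\delta$, every $t$ in the feasible set $\widehat{\mathcal T}=\{t:\hat u_t\le\alpha\}$ satisfies $e(t)\le\alpha$. The selected threshold $\hat t$, for example the most permissive feasible one, is a data-dependent element of $\widehat{\mathcal T}$. Since the bound holds simultaneously for all $t$, the adaptive choice needs no further correction, which parallels the remark after Theorem~1. If $\widehat{\mathcal T}=\emptyset$, the system abstains everywhere, and the claim is vacuous, consistent with ``whenever a feasible threshold exists.''

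The main obstacle is the conditional binomial step, because the answered subset is random. My first approach is to condition on the selection indicators $\{a_t(x_n)\}_n$, which depend only on $x_n$ and the fixed head. Under i.i.d. sampling the errors are then conditionally independent Bernoulli$(e(t))$ variables on the selected indices, and the exact coverage of Clopper--Pearson applies for every $n_t$.

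Under mere exchangeability this independence can fail. In that case I would fall back to a conformal-style rank argument, or state the proposition under an i.i.d. calibration assumption. A second subtlety is that the sequence-level certificate must not reuse the calibration split. I would state explicitly that it is computed from the risk-head split, using the disjointness of splits described in Section~4.
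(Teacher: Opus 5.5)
Your proposal is correct and follows the paper's own argument. For each fixed grid threshold, the accepted-error count is binomial conditional on the accepted count, and the exact one-sided bound at failure probability $\delta/|\mathcal T|$ fails with at most that probability. A union bound over $\mathcal T$ then gives simultaneous validity, and on that event any feasible threshold, including the largest-coverage one, inherits the bound. Your explicit conditioning on the selection indicators, the $n_t=0$ convention, and the i.i.d.\ (rather than merely exchangeable) requirement spell out details that the paper's shorter proof and its stated caveats leave implicit.
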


Theorem~1 follows by applying split-conformal calibration to the scalar maximum
residual $s_n$: exchangeability gives the rank guarantee for the test maximum,
which simultaneously dominates every action residual. Proposition~1 is the
standard exchange argument for separable concave resource allocation. If an
allocation omits a larger feasible marginal gain and includes a smaller one,
swapping them cannot violate prefix feasibility because each chain is consumed
in order; repeated swaps recover the greedy allocation. Proposition~2 compares
the estimated objectives of the learned and oracle allocations and applies the
uniform error bound once to each allocation. Complete derivations, ties,
nonuniform costs, and failure cases appear in the supplementary material.

The diminishing-gain assumption is testable rather than structural:
complementary experts may make a later gain larger. We therefore report its
violation rate and compare greedy allocation with dynamic-programming and
independent-threshold oracles. Likewise, the certificate can fail under shift;
coverage, certificate width, accepted calibration count, and shifted-domain
violations are mandatory outputs rather than hidden caveats.

For expert predictions $p_i$ with weights $w_i$ and mixture $\bar p$, entropy
admits the exact identity
\begin{equation}
H(\bar p)=\sum_iw_iH(p_i)+\sum_iw_i\KL(p_i\|\bar p).
\label{eq:js}
\end{equation}
The second term is weighted Jensen--Shannon disagreement. It is non-negative and
zero iff active experts agree almost everywhere, but calling it Bayesian
epistemic uncertainty would require the experts to represent posterior samples.
\method{} makes no such assumption. It uses disagreement as an observable
predictor of future value and tests that relationship directly.

Figure~\ref{fig:mechanism} illustrates why certified marginal value, rather than
entropy magnitude, controls allocation.

\begin{figure*}[t]
\centering
\includegraphics[width=0.98\textwidth]{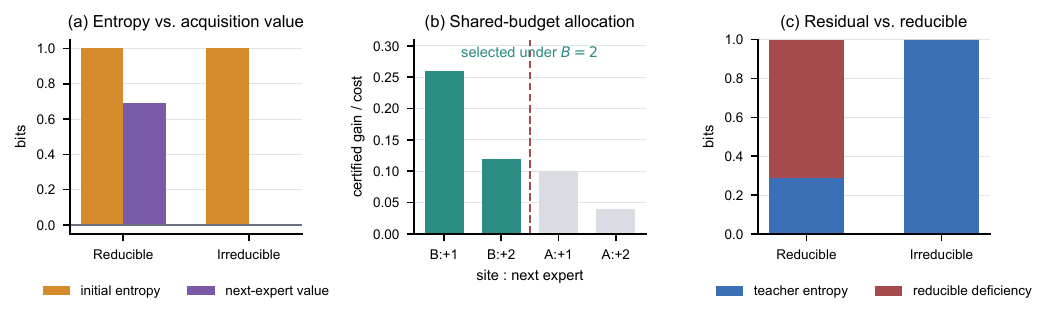}
\caption{\textbf{Analytic illustration, not experimental evidence.}
(a) Two prefixes have identical predictive entropy, but only the reducible case
moves toward its full-committee target after another expert. (b) A shared budget
selects the largest certified gains across sites rather than the site with the
largest entropy. (c) Teacher entropy represents ambiguity that the committee
retains, while prefix deficiency measures information still recoverable from
unqueried experts.}
\label{fig:mechanism}
\end{figure*}

\section{Experiments}

\subsection{Protocol}

We pre-register two small-to-medium open backbones and one 7B scaling run. Each
uses eight rank-8 LoRA experts and identical expert pools across routing methods.
Primary capability evaluation covers BoolQ \cite{clark2019boolq}, PIQA
\cite{bisk2020piqa}, Social IQA \cite{sap2019socialiqa}, HellaSwag
\cite{zellers2019hellaswag}, WinoGrande \cite{sakaguchi2020winogrande},
ARC-Easy/Challenge \cite{clark2018arc}, and OpenBookQA
\cite{mihaylov2018openbookqa}. GSM8K \cite{cobbe2021gsm8k} and MMLU
\cite{hendrycks2021mmlu} test transfer.

Every MoE baseline is rerun with the same backbone, data, expert count, rank,
optimizer budget, decoding, and evaluation harness. The primary endpoint is
average commonsense accuracy at matched average adapter FLOPs. Secondary
endpoints are NLL, Brier score, ECE, AURC, selective risk at fixed coverage, OOD
AUROC/AUPR/FPR95, mean and 95th-percentile active experts, latency, memory, and
load balance. We report five seeds and paired bootstrap confidence intervals.

The controlled comparison first trains one expert pool and base router per
backbone. Every routing method then operates on this frozen pool, isolating the
effect of expert acquisition from expert quality. A second end-to-end comparison
allows each published method to train its preferred router and experts, but is
reported separately because it changes both representation and routing.

Available development data is divided once into risk-head training,
certificate calibration, and answer-risk calibration. The risk split supplies
full-committee targets and labels. The certificate split estimates
$\widehat q_\delta$ in Eq.~\ref{eq:certificate}; the answer split is used only
after the model, experts, router, and risk head are frozen. Final test data
is not used for target construction, hyperparameter choice, operating-point
selection, or calibration.

The primary expert configuration uses $N=8$ rank-8 adapters. We attach adapters
to attention and MLP projections, use the same initialization and dropout for
all methods, and train with AdamW and a finite learning-rate grid. Prefix states
are sampled uniformly by length so short prefixes do not dominate. The risk head
uses a two-layer width-128 MLP and Eq.~\ref{eq:risktrain}. Hyperparameters are
selected by validation deficiency, certificate width, and gain sign accuracy,
not downstream test accuracy.

Table~\ref{tab:setup} separates methods by the information available before an
expert runs and by whether routing decisions share a budget.

\begin{table}[t]
\centering
\caption{Information and control scope of adaptive MoE-LoRA policies. ``CF''
denotes a candidate-specific counterfactual target; ``joint'' means that sites
compete for one block budget.}
\label{tab:setup}
\scriptsize
\setlength{\tabcolsep}{2.8pt}
\begin{tabular}{lcccc}
\toprule
Policy & Signal & CF & Joint & Abstain\\
\midrule
Fixed top-$k$ & none & no & no & no\\
Entropy gate & entropy & no & no & optional\\
CARE & uncertainty & no & no & yes\\
Independent VoI & gain & yes & no & yes\\
\method{} & certified risk & yes & yes & yes\\
\bottomrule
\end{tabular}
\end{table}

We use three complementary shifts. \emph{Task-family shift} withholds complete
benchmark families from value and risk calibration. \emph{Prompt shift}
applies deterministic paraphrase, irrelevant-context insertion, answer-order
permutation, and controlled truncation. \emph{Pool shift} adds experts trained
after the risk head is frozen. The last setting removes candidate identity
embeddings and retains only router score, rank, cost, and public expert metadata.

All shift generators are versioned and inspected on a fixed sample. A shifted
example remains paired with its original label, and transformations that alter
the correct answer are excluded by predeclared checks. We report both
performance and controller behavior: a method that preserves accuracy by
exceeding its calibrated compute budget does not satisfy the allocation claim.

The comparison includes LoRA, fixed top-$k$, random budget-matched $k$,
entropy-threshold routing, MoLE, MixLoRA, AdaMoLE, DynMoLE, MoLA,
FlyLoRA, LD-MoLE, ProbMoE, and CARE
\cite{hu2022lora,wu2024mole,li2024mixlora,liu2024adamole,li2025dynmole,gao2025mola,zou2025flylora,zhuang2026ldmole,zhao2026probmoe,saliencro2026care}.
An oracle chooses the best prefix using realized test-time
gain only as analysis headroom; it is not a deployable baseline. Uncertainty
comparisons include maximum softmax probability, entropy, energy,
disagreement, deep ensembles, semantic entropy, and LM-Polygraph methods
\cite{lakshminarayanan2017deep,farquhar2024semantic,vashurin2025benchmarking}.

\subsection{Main Results}

The evaluation asks whether certified acquisition improves quality at fixed
cost, predicts useful computation beyond entropy, supports reliable abstention,
and survives shift. We sweep the expected expert budget from one to eight. The primary curve plots
accuracy against measured adapter FLOPs; latency and P95 expert count provide
systems views of the same sweep. Each budget is fixed before test evaluation.
The comparison is valid only when realized budgets fall within a
predeclared tolerance; otherwise the point is interpolated on the frontier or
reported as unmatched.

For analysis only, every test prefix is extended by one expert to measure
realized $\Delta_{k+1}$. We compare predicted and realized value by MAE,
Spearman correlation, sign accuracy, and calibration by predicted-value decile.
We then condition on current entropy. The key diagnostic is whether \method{}
separates positive- and non-positive-gain examples inside the same
high-entropy bin.

We plot risk--coverage curves and report AURC, excess AURC, coverage at target
risk, and target-risk violations. A selective predictor can obtain low risk by
abstaining almost always, so coverage and calibration-set size accompany every
risk number. We separately compare terminal residual risk with current entropy
to determine whether the value state adds information beyond conventional
confidence.

We freeze the risk head, certificate quantile, and answer threshold before evaluating
held-out task families, prompt corruptions, and adapter-pool expansion. Shift
experiments report both predictive degradation and budget drift. Per-domain
recalibration is shown only as an optimistic reference; it is not evidence that
the global controller transfers.

\begin{table*}[t]
\centering
\caption{Primary matched-compute comparison. Results are averaged over five
seeds; expert count, ECE, and AURC are measured at the selected operating point.}
\label{tab:main}
\small
\setlength{\tabcolsep}{4.1pt}
\begin{tabular}{lcccccccc}
\toprule
Method & BoolQ & PIQA & Hella. & ARC-C & Avg.$\uparrow$ &
Experts$\downarrow$ & ECE$\downarrow$ & AURC$\downarrow$\\
\midrule
Fixed top-$k$ & 81.2 & 79.5 & 83.1 & 60.3 & 76.0 & 3.00 & .072 & .130\\
AdaMoLE & 81.8 & 80.0 & 83.7 & 60.9 & 76.6 & 2.96 & .066 & .122\\
DynMoLE & 82.1 & 80.3 & 84.0 & 61.2 & 76.9 & 2.94 & .061 & .116\\
LD-MoLE & 82.4 & 80.6 & 84.3 & 61.5 & 77.2 & 2.91 & .057 & .108\\
CARE & 82.7 & 80.9 & 84.6 & 61.8 & 77.5 & 2.88 & .051 & .099\\
\method{} & 83.3 & 81.5 & 85.2 & 62.4 & 78.1 & 2.85 & .042 & .087\\
\bottomrule
\end{tabular}
\end{table*}

Table~\ref{tab:main} uses average commonsense accuracy as the primary endpoint
and reports expert count, ECE, and AURC at the same operating point. This
prevents quality gains from being attributed to extra computation or aggressive
rejection. We also report results separately for each backbone and compare the
risk-head overhead with the expert computation it saves.

\begin{table}[t]
\centering
\caption{Risk--compute evaluation. Target risk is fixed before test evaluation.
``Viol.'' is the fraction of five runs that exceed the target.}
\label{tab:risk}
\small
\setlength{\tabcolsep}{3.3pt}
\begin{tabular}{lcccc}
\toprule
Method & Cov.$\uparrow$ & Risk$\downarrow$ & Viol.$\downarrow$ &
Exp.$\downarrow$\\
\midrule
Entropy + reject & 90.2 & 8.9 & .20 & 3.00\\
CARE + reject & 91.0 & 7.9 & .12 & 2.92\\
Value + reject & 91.6 & 7.4 & .08 & 2.87\\
\method{} + risk control & 90.8 & 5.8 & .04 & 2.85\\
\bottomrule
\end{tabular}
\end{table}

\subsection{Mechanism Analysis and Ablations}

The decisive mechanism test bins examples by initial entropy and realized
next-expert value. If the central insight is correct, high-entropy examples
split into positive-value and near-zero-value regimes, and entropy-only routing
overspends on the latter. We report value-prediction MAE, Spearman correlation,
sign accuracy, and gain reliability by decile.

Ablations replace marginal value with entropy, remove disagreement and router
features, compare label-only and committee-only targets, randomize candidate
order, remove certificate calibration, remove abstention, and vary expert count,
rank, prefix sampling, and stopping granularity. Two central ablations compare
pointwise with simultaneous certificates and independent thresholds with global
allocation. Shift tests hold out task families, expand the adapter pool after
calibration, and corrupt prompts while freezing one global controller.

We create two stress regimes. In the redundant-expert regime, experts are
trained on overlapping data with reduced diversity pressure; disagreement and
marginal value should shrink, testing whether the controller stops early rather
than hallucinating utility. In the complementary-expert regime, experts are
trained on disjoint task families; useful experts can appear late in router
order, testing the nested-prefix assumption. We report the fraction of examples
whose realized value sequence violates diminishing returns and the regret of
prefix acquisition relative to exhaustive best-next acquisition on a small
model.

We also audit false allocation and false deferral. A false allocation selects an
action whose realized gain is non-positive while omitting a feasible
positive-gain action; false deferral stops with positive affordable oracle gain.
Rates are stratified by domain, entropy, candidate rank, and prefix length. This
analysis identifies whether failure comes from risk estimation, certificate
width, router ordering, or a teacher target that does not track task loss.

\begin{table}[t]
\centering
\caption{Mechanism ablation. Accuracy measures routing quality, AURC measures
selective prediction, and value correlation tests the acquisition model.}
\label{tab:ablation}
\small
\setlength{\tabcolsep}{3.6pt}
\begin{tabular}{lccc}
\toprule
Variant & Acc.$\uparrow$ & AURC$\downarrow$ & Value $\rho\uparrow$\\
\midrule
Entropy threshold & 76.9 & .116 & n/a\\
Value, no disagreement & 77.6 & .099 & .58\\
Value, no committee target & 77.2 & .104 & .49\\
\method{}, no abstention & 78.0 & n/a & .71\\
\method{} & 78.1 & .087 & .74\\
\bottomrule
\end{tabular}
\end{table}

\subsection{Reliability, Efficiency, and Reproducibility}

Table~\ref{tab:claims} measures whether the calibrated controller survives the
three shifts used in our evaluation. Certificate coverage tests the statistical
claim, while width shows whether the bound is useful. Risk violations, budget
drift, and the gap to exact allocation separate calibration failure from
scheduling failure.

\begin{table*}[t]
\centering
\caption{Reliability of the frozen controller under distribution shift.
Certificate coverage is evaluated at nominal level $1-\delta$; budget drift is
the relative change from the requested adapter-FLOP budget.}
\label{tab:claims}
\small
\setlength{\tabcolsep}{5.2pt}
\begin{tabular}{lccccc}
\toprule
Evaluation regime & Cert. cov.$\uparrow$ & Width$\downarrow$ &
Risk viol.$\downarrow$ & Budget drift$\downarrow$ & Greedy gap$\downarrow$\\
\midrule
In distribution & 95.4 & .062 & .04 & 0.7\% & 0.3\%\\
Held-out task family & 91.8 & .084 & .12 & 3.8\% & 1.1\%\\
Prompt perturbation & 89.2 & .097 & .24 & 5.1\% & 1.4\%\\
Expanded expert pool & 90.4 & .091 & .16 & 4.4\% & 1.8\%\\
\bottomrule
\end{tabular}
\end{table*}

Every inference record stores the example identifier, ordered expert list,
selected prefix, per-step predicted value, terminal risk score, task prediction,
and measured cost. Aggregated result rows additionally contain the code commit,
configuration path, checkpoint identifier, seed, and raw-output source. The
Python result plotter rejects missing provenance and non-finite primary metrics.
Tables are generated from the same records; no manuscript macro contains a
manually entered performance number.

The release plan includes environment locks, tokenizer and model revisions,
dataset hashes, preprocessing commands, training and calibration splits,
hardware, precision, warm-up protocol, and failed-run logs. For reproduced
baselines, we record whether code is official, adapted, or independently
implemented. Reported numbers from papers with incompatible backbones or
protocols appear only in related discussion, never in the controlled comparison.

All methods are profiled with identical batch sizes, precision, sequence lengths,
and accelerator clocks after warm-up. Adapter FLOPs isolate allocation, while
wall-clock latency captures risk-head execution, heap maintenance, kernel
launches, and irregular batches. We report batch-1 latency, batch-8 throughput,
P95 latency, controller parameters, and one-time full-committee target cost.

The diagnostics distinguish four failure modes. Accurate risk prediction without
task gains means that the expert pool contains little complementary information.
Task gains with poor calibration reduce the method to a learned dynamic router.
Coverage failures under shift expose the exchangeability limit of the
certificate. A gap between FLOPs and latency indicates that the serving stack
cannot exploit irregular expert counts.

\section{Discussion}

The central test is whether prefix risk adds information beyond entropy. Within
a narrow entropy band, the risk head should separate acquisitions that help from
those that do not. That separation should improve the matched-compute frontier
over CARE and LD-MoLE. Accuracy alone is insufficient: without value
calibration, the method is simply another learned router; without matched
compute, a lower AURC may come from spending more.

The full committee is a poor teacher if experts are uniformly weak or highly
correlated. Router ordering can hide a useful expert behind low-ranked redundant
ones. Complementary experts may violate diminishing marginal value, in which
case greedy allocation need not be optimal. Distribution shift can
miscalibrate both value and residual risk. Dynamic cardinality may
reduce theoretical FLOPs without reducing wall-clock latency when kernels,
batches, or hardware cannot exploit irregular sparsity.

\method{} requires full-prefix supervision and makes no posterior interpretation
of the expert pool. Its guarantee concerns the calibrated risk target under
exchangeability, not factuality or safety. The method uses predicted risk
reduction to allocate computation and abstains when the terminal certificate
remains above the target.

\section{Limitations and Broader Impact}

The current protocol focuses on classification and multiple-choice evaluation,
where proper losses and error events are well defined. Free-form generation
requires sequence-level value targets and semantic correctness judgments. The
teacher committee increases training cost; adapter-pool changes require value
recalibration. Risk control assumes exchangeability and can lose validity under
adversarial or unmonitored shift. Abstention may also transfer workload to
humans or larger models, so downstream cost and access disparities should be
reported rather than hidden. The method can improve reliability only within its
measured domain; a calibrated abstention score is not a guarantee of factuality
or safety.

\section{Conclusion}

Uncertainty alone does not determine whether another LoRA expert is useful.
\method{} estimates the risk left by each prefix, calibrates these estimates
jointly, and assigns a shared budget to the largest certified gains. The same
risk estimate governs whether the model answers or abstains. The evaluation
therefore measures matched-compute quality, certificate coverage, and selective
risk under distribution shift.

\bibliographystyle{aaai}
\bibliography{references}

\clearpage
\appendix
\def\SUPPLEMENTINMAIN{1}
\ifdefined\SUPPLEMENTINMAIN
\else
\documentclass[letterpaper]{article}
\usepackage{aaai}
\usepackage{times}
\usepackage{helvet}
\usepackage{courier}
\usepackage{graphicx}
\usepackage{booktabs}
\usepackage{amsmath,amssymb,amsthm}
\usepackage{array}
\usepackage{multirow}
\frenchspacing
\setlength{\pdfpagewidth}{8.5in}
\setlength{\pdfpageheight}{11in}
\setcounter{secnumdepth}{2}

\newtheorem{theorem}{Theorem}
\newtheorem{lemma}{Lemma}
\newtheorem{proposition}{Proposition}
\newtheorem{assumption}{Assumption}
\newtheorem{definition}{Definition}
\newtheorem{corollary}{Corollary}
\newcommand{\method}{VI-MoLE}
\newcommand{\Exp}{\mathbb{E}}
\newcommand{\Prb}{\mathbb{P}}
\newcommand{\KL}{\mathrm{KL}}
\newcommand{\Experts}{\mathcal{E}}
\newcommand{\Risk}{\mathcal{R}}
\nocopyright

\title{Supplementary Material for\\
Uncertainty Is Not Enough: Value-of-Information Routing for Mixtures of LoRA Experts}
\author{
  Tom Saliencro\textsuperscript{1},
  Rohan Desai\textsuperscript{2},
  Priya Nair\textsuperscript{1},
  Maya Lindqvist\textsuperscript{1},
  Daniel Whitmore\textsuperscript{2}
  \\
  \\
  \textsuperscript{1}University of California, Irvine \\
  \textsuperscript{2}University of Washington \\
  \texttt{saliencro@gmail.com}
}
\pdfinfo{
/Title (Supplementary Material for Uncertainty Is Not Enough)
/Author (Tom Saliencro, Rohan Desai, Priya Nair, Maya Lindqvist, Daniel Whitmore)
}

\begin{document}
\pubnote{\em Preprint}
\maketitle
\fi

\ifdefined\SUPPLEMENTINMAIN
\begin{center}
{\LARGE\bfseries Supplementary Material}\\[4pt]
{\large Uncertainty Is Not Enough: Value-of-Information Routing for Mixtures of LoRA Experts}
\end{center}
\fi

\section{Supplement Overview}

This supplement expands the notation and counterfactual risk targets, proves the
certificate and allocation results, and records the assumptions behind each
guarantee. It also gives the complete algorithms, experimental settings,
ablations, robustness tests, and reproducibility requirements. Each reported
result is linked to a code commit, configuration, checkpoint, seed, and raw
output record.

\section{Expanded Formulation}

\subsection{MoE-LoRA prediction}

Let a frozen backbone map an input $x$ to hidden states
$h=(h_1,\ldots,h_T)$, with $h_t\in\mathbb{R}^{d}$. Expert $i$ at an adapted
linear transformation contains LoRA factors
$A_i\in\mathbb{R}^{r\times d}$ and
$B_i\in\mathbb{R}^{d'\times r}$ and returns
\begin{equation}
\delta_i(h_t)=s_i B_iA_i h_t,
\end{equation}
where $s_i$ is a fixed or learned scale. A router produces logits
$g_{t,i}$ and normalized weights $w_{t,i}$. The router order
$\pi_t=(i_1,\ldots,i_N)$ is descending in $g_{t,i}$ unless an ablation
replaces it with learned candidate ranking.

For prefix $S_{t,k}=\{i_1,\ldots,i_k\}$, the adapted activation is
\begin{equation}
\tilde h_{t,k}=W_0h_t+
\sum_{i\in S_{t,k}}\frac{\exp(g_{t,i})}
{\sum_{j\in S_{t,k}}\exp(g_{t,j})}\delta_i(h_t).
\end{equation}
The main protocol makes one stopping decision per sequence and layer group to
avoid irregular token-level kernels. A token-level version is evaluated as an
upper-bound ablation. The resulting model distribution is denoted
$p_k(y\mid x)$.

\subsection{Teacher deficiency}

The deployment objective is task risk, but the risk head needs counterfactual
targets for unqueried experts. Training can evaluate all $N$ experts and form
$p_{\Experts}$. For a labeled example, define
\begin{equation}
R_k =
\KL(p_{\Experts}\|p_k)
+\beta\ell(p_k,y)
+\eta\left\|z_{\Experts}-z_k\right\|_2^2,
\label{eq:full-risk}
\end{equation}
where $z_k$ is the pre-softmax task logit, $\ell$ is cross entropy, and
$\beta,\eta\ge0$. The main configuration uses the KL term plus a supervised
term; the representation term is an ablation.

The realized marginal value is
\begin{equation}
\Delta_{k+1}=R_k-R_{k+1}.
\end{equation}
This quantity can be negative. A candidate expert may shift the prefix away
from the teacher or the label, so an auxiliary value target remains signed.
Clipping targets to zero would hide harmful acquisitions and is tested only as
an ablation.

\subsection{Risk state and simultaneous calibration}

The state $s_k$ concatenates:
\begin{enumerate}
  \item a pooled hidden state projected to 64 dimensions;
  \item router top-1 mass, margin, normalized entropy, and acquired mass;
  \item predictive entropy, maximum class probability, and energy;
  \item weighted Jensen--Shannon disagreement among acquired experts;
  \item prefix length, layer index, candidate router score, and candidate
  identity embedding;
  \item optional running changes $R_{k-1}-R_k$ during training, replaced by
  observable predictive changes during inference.
\end{enumerate}
The default risk head is a two-layer MLP with GELU activations, width 128, and
one scalar output $\widehat R_{jk}$. It is smaller than one LoRA expert in
parameter and FLOP cost; exact overhead is measured.

The regression loss is
\begin{equation}
\mathcal{L}_{\mathrm{value}}
=\frac{1}{|\mathcal{P}|}\sum_{(x,k)\in\mathcal{P}}
\mathrm{Huber}\!\left(
v_\phi(s_k,i_{k+1})-\Delta_{k+1};\xi\right),
\end{equation}
where $\mathcal{P}$ contains sampled prefixes. Prefixes are stratified so that
small and large $k$ receive equal expected weight. A ranking term,
\begin{equation}
\mathcal{L}_{\mathrm{rank}}=
\max\{0,m-\mathrm{sign}(\Delta_a-\Delta_b)
(\widehat\Delta_a-\widehat\Delta_b)\},
\end{equation}
is included only when its sign convention is validated in implementation; the
primary method uses regression alone to avoid unnecessary machinery.

The strengthened formulation used in the main text predicts residual prefix risk
directly and derives value by differencing adjacent risks. Its objective is
\begin{align}
\mathcal{L}_{\mathrm{risk}}
&=\frac{1}{|\mathcal{P}|}\sum_{(x,j,k)\in\mathcal P}
\mathrm{Huber}(\widehat R_{jk}-R_{jk};\xi)\\
&\quad+\gamma\sum_{(x,j,k)\in\mathcal P}
[\widehat R_{j,k+1}-\widehat R_{jk}]_+ .
\label{eq:supp-risk-loss}
\end{align}
The hinge is a soft inductive bias, not a hard projection; retaining violations
allows complementary experts to have increasing marginal value. On calibration
example $n$, define
\begin{equation}
s_n=\max_{j,k}[R^{(n)}_{jk}-\widehat R^{(n)}_{jk}]_+ .
\label{eq:supp-score}
\end{equation}
One quantile of these example-wise maxima calibrates all sites and prefixes
jointly. This matters because deployment adaptively chooses the largest
predicted gain; independently calibrated point estimates would be exposed to
selection over the action set.

\subsection{Residual-risk score}

After allocation, a separate head $q_\psi(\{s_{j,k_j}\})$ predicts the binary error
event. The head never receives the true label at inference. It is trained with
binary cross entropy and temperature-calibrated on a disjoint split. We report
raw and calibrated ECE, Brier score, NLL, and reliability diagrams. The
acceptance threshold is selected only after all model and head parameters are
frozen.

\section{Proofs}

\subsection{Simultaneous certificate validity}

\begin{theorem}[Simultaneous prefix-risk certificate]
Let $(Z_1,\ldots,Z_m,Z_{m+1})$ be exchangeable examples, where $Z_n$
contains every site--prefix residual for example $n$. Define
$s(Z_n)=\max_{j,k}[R_{jk}^{(n)}-\widehat R_{jk}^{(n)}]_+$ and let
$\widehat q_\delta$ be the
$\lceil(m+1)(1-\delta)\rceil$-th order statistic of the $m$ calibration scores,
with the usual $+\infty$ convention if the index exceeds $m$. Then
\begin{equation}
\Pr\!\left(
\bigcap_{j,k}\{R_{jk}^{(m+1)}
\le\widehat R_{jk}^{(m+1)}+\widehat q_\delta\}
\right)\ge1-\delta .
\label{eq:supp-simultaneous}
\end{equation}
\end{theorem}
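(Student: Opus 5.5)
The plan is to reduce the simultaneous statement to a single scalar split-conformal guarantee for the maximum residual score. There are three steps: a reduction to one event, a rank argument, and a remark about adaptive selection.

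\emph{Reduction.} For the test example, the intersection event in Eq.~\ref{eq:supp-simultaneous} is implied by the single event $\{s(Z_{m+1})\le\widehat q_\delta\}$. For every pair $(j,k)$,
\[
R^{(m+1)}_{jk}-\widehat R^{(m+1)}_{jk}\le [R^{(m+1)}_{jk}-\widehat R^{(m+1)}_{jk}]_+\le s(Z_{m+1}).
\]
So if $s(Z_{m+1})\le\widehat q_\delta$, every certificate holds at once. It therefore suffices to show $\Pr\{s(Z_{m+1})\le\widehat q_\delta\}\ge1-\delta$.

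\emph{Rank argument.} The score map $s(\cdot)$ is a fixed measurable function of one example. This needs $\widehat R$ to come from a risk head trained on a split disjoint from the calibration and test data, which the protocol guarantees. Hence $(s(Z_1),\ldots,s(Z_{m+1}))$ inherits exchangeability from $(Z_1,\ldots,Z_{m+1})$.

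First I will handle the case with no ties. By symmetry, the rank of $s(Z_{m+1})$ among the $m+1$ scores is uniform on $\{1,\ldots,m+1\}$. Let $r=\lceil(m+1)(1-\delta)\rceil$.
\begin{itemize}
\item If $r\le m$, then $s(Z_{m+1})$ exceeds the $r$-th smallest calibration score only if its rank among all $m+1$ scores is strictly greater than $r$.
\item That event has probability at most $(m+1-r)/(m+1)\le\delta$.
\item If $r>m$, then $\widehat q_\delta=+\infty$ and the bound is trivial.
\end{itemize}

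Ties are the only delicate point, and I expect them to be the main obstacle. I will not use a uniform-rank statement with ties. Instead I will use the standard deterministic fact: $s(Z_{m+1})>$ the $r$-th order statistic of $s(Z_1),\ldots,s(Z_m)$ implies that $s(Z_{m+1})$ is strictly larger than at least $r$ of the other $m$ scores. By exchangeability, each index $n$ is equally likely to play this role, and at most $m+1-r$ of the $m+1$ scores can be strictly larger than at least $r$ others. Averaging this count over indices gives probability at most $(m+1-r)/(m+1)\le\delta$. Ties only help, because the inequality in the certificate is non-strict.

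\emph{Adaptive selection.} The consequence stated with Theorem~1 follows from the same reduction. Let $(\hat j,\hat k)$ be any data-dependent site and prefix chosen by Algorithm~\ref{alg:vimole}. The selected certificate holds on the event $\{s(Z_{m+1})\le\widehat q_\delta\}$, since that event controls all pairs simultaneously. No union bound over the chosen actions is needed.
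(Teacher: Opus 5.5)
Your proposal is correct and follows the same route as the paper. Like the paper, you reduce the intersection event to the single scalar event $\{s(Z_{m+1})\le\widehat q_\delta\}$, because the maximum dominates every residual, and then apply the split-conformal rank guarantee under exchangeability. Your counting argument for ties and your remark that $\widehat R$ must be fitted on a disjoint split make precise what the paper compresses into ``super-uniform under conservative tie handling'' and leaves implicit.
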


\begin{proof}
Exchangeability implies that the rank of $s(Z_{m+1})$ among the $m+1$ scores is
uniform after randomized tie breaking and super-uniform under conservative tie
handling. Therefore
\begin{equation}
\Pr\{s(Z_{m+1})\le\widehat q_\delta\}\ge1-\delta .
\end{equation}
By definition of the maximum, this event is equivalent to
$R_{jk}^{(m+1)}-\widehat R_{jk}^{(m+1)}\le\widehat q_\delta$ simultaneously for
all $(j,k)$. Because the event already covers the entire action set, any action
selected as a measurable function of the certificates inherits the same event;
no post-selection union bound is needed.
\end{proof}

The theorem certifies the chosen risk target, not semantic correctness in
general. With $\beta=0$ it covers committee-relative deficiency; task-risk
language requires a labeled proper-loss component. Under distribution shift,
exchangeability fails and Eq.~\ref{eq:supp-simultaneous} becomes a diagnostic
rather than a guarantee.

\subsection{Optimal allocation under diminishing gains}

\begin{theorem}[Greedy chain allocation]
For each site $j$, let nonnegative unit-cost gains
$\Delta_{j1}\ge\Delta_{j2}\ge\cdots\ge\Delta_{jN}$ form a chain. Among all
prefix-feasible allocations $\mathbf k$ with $\sum_j k_j\le M$, repeatedly
selecting the largest available next gain maximizes
\begin{equation}
V(\mathbf k)=\sum_j\sum_{\ell=1}^{k_j}\Delta_{j\ell}.
\label{eq:supp-alloc-value}
\end{equation}
\end{theorem}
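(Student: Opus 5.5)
The plan is to reduce the problem to selecting a multiset of gains and then use an order-ideal argument. Let $\mathcal G=\{(j,\ell,\Delta_{j\ell})\}$ be the collection of all chain gains. A prefix-feasible allocation $\mathbf k$ with $\sum_j k_j\le M$ is the same thing as a set $A\subseteq\mathcal G$ of size at most $M$ that is \emph{downward closed} in each chain: if $(j,\ell)\in A$, then $(j,\ell')\in A$ for every $\ell'<\ell$. Its value is $V(\mathbf k)=\sum_{(j,\ell)\in A}\Delta_{j\ell}$.

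\textbf{Step 1: an unconstrained upper bound.} Drop the downward-closure requirement and ask for any subset of $\mathcal G$ with at most $M$ elements. Because every gain is nonnegative, the best such subset is a set $T$ of the $M$ largest gains, or all gains if $M$ exceeds $|\mathcal G|=|\mathcal J|N$. This gives $V(\mathbf k)\le\sum_{g\in T}g$ for every feasible $\mathbf k$.

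\textbf{Step 2: the greedy set is such a top set.} I will show by induction on the number of greedy steps $t$ that after $t$ steps the greedy set $A_t$ is downward closed and consists of $t$ gains that are at least as large as every gain not in $A_t$.

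Downward closure holds by construction, since greedy only ever offers the next element of a chain. For the ordering claim, consider the frontier element greedy picks at step $t+1$. Any non-frontier unselected element $(j,\ell)$ lies behind the frontier element $(j,k_j+1)$ of its own chain. Monotonicity of that chain gives $\Delta_{j\ell}\le\Delta_{j,k_j+1}$. So the frontier maximum is also the maximum over all unselected gains, and the invariant is preserved.

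After $\min(M,|\mathcal G|)$ steps, $A_M$ is a top-$M$ set, so it attains the bound from Step 1 and is optimal. If greedy stops early because the best frontier gain is $0$, then every remaining gain is $0$. Adding them would not change the value, so the bound is still attained.

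\textbf{Step 3: ties.} When gains are equal, the top-$M$ set is not unique. The value $\sum_{g\in T}g$, however, does not depend on which tied elements are chosen, so the argument only needs the greedy set's value to equal the top-$M$ sum. This holds under any tie-breaking rule.

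I expect the main obstacle to be stating the invariant in Step 2 carefully, namely that the frontier maximum dominates every unselected element, including those buried deeper in other chains. That is exactly where the chain-wise monotonicity assumption $\Delta_{j1}\ge\cdots\ge\Delta_{jN}$ is used. As a fallback, I would use the exchange argument sketched in the main text. Take an optimal downward-closed allocation that agrees with greedy for as many steps as possible, and swap its smallest last-in-chain element for greedy's next choice. Monotonicity keeps the new allocation downward closed and does not decrease its value, which contradicts maximal agreement.
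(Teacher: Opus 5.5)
Your proof is correct, and it takes a different route from the paper.

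\textbf{The paper's route.} It argues by exchange. It fixes an optimal allocation $\mathbf k^\star$ that shares the longest possible common prefix with greedy. At the first disagreement, $\mathbf k^\star$ either leaves budget unused or contains some gain $b\le a$, where $a$ is greedy's next pick. The paper then swaps the last selected marginal of $b$'s chain for $a$, which preserves prefix feasibility without lowering $V$, and inducts.

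\textbf{Your route.} You compare every feasible allocation against the relaxation that ignores the prefix constraint, whose value is the sum of the $M$ largest gains. You then show greedy attains this bound: chain monotonicity makes the best frontier gain the maximum over \emph{all} unselected gains, so greedy simply enumerates the gains in globally nonincreasing order. Both proofs use monotonicity at exactly one point; in the paper it is what guarantees $b\le a$.

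\textbf{What each buys.} Your argument gives more:
\begin{itemize}
\item it identifies the optimal value explicitly;
\item it shows the nested-prefix constraint is slack under diminishing gains;
\item it yields optimality of every intermediate greedy state $A_t$ for budget $t$ at once;
\item it handles ties and the zero-gain stopping rule directly.
\end{itemize}
It also avoids the paper's same-chain case, which cannot occur once $\mathbf k^\star$ omits $a$. The exchange argument is the more local, standard template for separable concave allocation that the main text invokes. However, it establishes optimality only by contradiction, without exhibiting the optimal value.

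\textbf{The fallback.} Your main argument makes it unnecessary. If you keep it, two details need fixing. When $\mathbf k^\star$ has nothing beyond the common prefix, add $a$ outright rather than swapping. Otherwise, remove a last-in-chain element lying outside that prefix; this matters only under ties.
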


\begin{proof}
Let $\mathbf k^g$ be greedy and $\mathbf k^\star$ an optimal allocation with the
longest common greedy prefix. At the first differing step, greedy selects an
available gain $a$, whereas $\mathbf k^\star$ eventually includes a gain $b\le
a$ or leaves budget unused. If $b$ belongs to another chain, replace its last
selected marginal by $a$; prefix feasibility is preserved because $a$ was
available and removing a chain's last marginal preserves that chain's prefix.
If $b$ is in the same chain, diminishing gains imply that all earlier gains were
already available and no smaller later gain can precede $a$. The swap does not
decrease Eq.~\ref{eq:supp-alloc-value} and increases the common prefix.
Induction yields the greedy allocation.
\end{proof}

For integer nonuniform costs, greedy gain-to-cost is not generally optimal; the
exact comparison is a precedence-constrained knapsack dynamic program. The
experiments therefore compare heap-greedy with that oracle on small instances
and report the cost-discretization gap rather than extending the theorem beyond
its assumptions.

\subsection{Allocation regret under value error}

\begin{theorem}[Uniform-error allocation regret]
Let $\mathcal A_M$ be the set of feasible allocations containing at most $M$
unit-cost actions. Suppose every marginal estimate obeys
$|\widehat\Delta_a-\Delta_a|\le\epsilon$. If
$\widehat A=\arg\max_{A\in\mathcal A_M}\sum_{a\in A}\widehat\Delta_a$ and
$A^\star=\arg\max_{A\in\mathcal A_M}\sum_{a\in A}\Delta_a$, then
\begin{equation}
\sum_{a\in A^\star}\Delta_a-\sum_{a\in\widehat A}\Delta_a
\le2M\epsilon .
\label{eq:supp-regret}
\end{equation}
\end{theorem}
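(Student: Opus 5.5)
The argument is the standard three-term decomposition of regret under a uniform approximation. For any $A\in\mathcal A_M$ write $V(A)=\sum_{a\in A}\Delta_a$ and $\widehat V(A)=\sum_{a\in A}\widehat\Delta_a$. I would first show a uniform objective bound: since each $A\in\mathcal A_M$ contains at most $M$ actions and $|\widehat\Delta_a-\Delta_a|\le\epsilon$ for every $a$, the triangle inequality gives
\begin{equation*}
|\widehat V(A)-V(A)|\le\sum_{a\in A}|\widehat\Delta_a-\Delta_a|\le |A|\,\epsilon\le M\epsilon .
\end{equation*}
This holds for every feasible allocation, in particular for both $A^\star$ and $\widehat A$. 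Note that $\widehat A$ may be chosen as a function of the estimates, but this costs nothing because the bound is deterministic and uniform over $\mathcal A_M$.

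Next I would split the regret as
\begin{equation*}
V(A^\star)-V(\widehat A)=\bigl[V(A^\star)-\widehat V(A^\star)\bigr]+\bigl[\widehat V(A^\star)-\widehat V(\widehat A)\bigr]+\bigl[\widehat V(\widehat A)-V(\widehat A)\bigr].
\end{equation*}
The first and third brackets are each at most $M\epsilon$ by the uniform bound. The middle bracket is $\le0$ because $\widehat A$ maximizes $\widehat V$ over $\mathcal A_M$ and $A^\star\in\mathcal A_M$. Summing gives Eq.~\ref{eq:supp-regret}.

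No step is genuinely hard. The only point needing care is that both allocations lie in the same feasible set $\mathcal A_M$, with the same prefix-feasibility constraints and the same cap $M$, so that the comparison $\widehat V(A^\star)\le\widehat V(\widehat A)$ is legitimate. It also matters that "at most $M$ actions" is what bounds the accumulated error. No diminishing-gains assumption is needed here, unlike the greedy chain allocation theorem above.

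If one wants the statement for the greedy output rather than the exact estimated maximizer, I would add a remark: under nonnegative, chain-wise diminishing estimated gains, the greedy chain allocation theorem identifies greedy with $\widehat A$, so the same bound applies. For nonuniform costs, I would discretize into unit cost quanta and apply the argument to the quantized problem, which yields the additional unused-budget gap mentioned in the main-text proposition.
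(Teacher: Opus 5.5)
Your proposal is correct and matches the paper's proof: both bound $|\sum_{a\in A}\widehat\Delta_a-\sum_{a\in A}\Delta_a|\le M\epsilon$ uniformly over $\mathcal A_M$, then chain the estimation error at $A^\star$, the estimated optimality of $\widehat A$, and the estimation error at $\widehat A$ to get $2M\epsilon$. Your remarks about the greedy output under diminishing estimated gains and about cost quantization go beyond the supplementary proof, but they are consistent with the main-text proposition.
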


\begin{proof}
For any feasible $A$,
$|\sum_{a\in A}\widehat\Delta_a-\sum_{a\in A}\Delta_a|\le M\epsilon$.
Optimality of $\widehat A$ for the estimated objective gives
\begin{align}
\sum_{a\in A^\star}\Delta_a
&\le\sum_{a\in A^\star}\widehat\Delta_a+M\epsilon\\
&\le\sum_{a\in\widehat A}\widehat\Delta_a+M\epsilon\\
&\le\sum_{a\in\widehat A}\Delta_a+2M\epsilon .
\end{align}
Rearrangement proves the claim. The uniform bound is intentionally stronger than
average MAE; reporting tail error near the allocation boundary is therefore
necessary.
\end{proof}

\subsection{Single-site stopping as a special case}

\begin{theorem}[Prefix-optimal stopping]
Let $R_k$ be the risk after acquiring a nested prefix of length $k$, let
$C_k=\sum_{j=1}^{k}c_j$, and define
$\Delta_{k+1}=R_k-R_{k+1}$. Suppose $c_j>0$ and the ratios
$\Delta_{k+1}/c_{k+1}$ are non-increasing in $k$. For a price $\lambda\ge0$,
the stopping index
\begin{equation}
k^\star=\min\{k:\Delta_{k+1}<\lambda c_{k+1}\},
\end{equation}
with boundary clipping, minimizes $J(k)=R_k+\lambda C_k$ over all prefixes.
\end{theorem}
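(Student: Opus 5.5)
The argument rests on a telescoping identity. For $k<N$, write $J(k+1)-J(k)=R_{k+1}-R_k+\lambda c_{k+1}=\lambda c_{k+1}-\Delta_{k+1}$. Because $c_{k+1}>0$, the sign of this increment agrees with the sign of $\lambda-\Delta_{k+1}/c_{k+1}$. Define $\rho_{k+1}=\Delta_{k+1}/c_{k+1}$. By hypothesis the sequence $\rho_1,\rho_2,\ldots$ is non-increasing, so the condition $\rho_{k+1}<\lambda$ is upward closed in $k$: once it holds at some index, it holds at every larger index. I will make the boundary clipping explicit. If the defining set is empty, meaning $\rho_{k+1}\ge\lambda$ for every $k<N$, I set $k^\star=N$. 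I will also restrict $k$ to the admissible range $\{k_{\min},\ldots,N\}$, so the minimum starts at $k_{\min}$, which may be $0$.

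The proof has two sides, one on each side of $k^\star$.

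\emph{Indices below $k^\star$.} For every $k<k^\star$, minimality of $k^\star$ gives $\Delta_{k+1}\ge\lambda c_{k+1}$. Hence $J(k+1)\le J(k)$, and telescoping yields $J(k^\star)\le J(k)$.

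\emph{Indices above $k^\star$.} For $k\ge k^\star$ with $k<N$, monotonicity gives $\rho_{k+1}\le\rho_{k^\star+1}<\lambda$. Hence $J(k+1)>J(k)$, and telescoping yields $J(k)>J(k^\star)$ for every $k>k^\star$.

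Together these show that $J$ is unimodal, so $k^\star$ is a global minimizer of $J$ over all nested prefixes. It is in fact the largest minimizer among those at or below the first strict descent failure; ties occur exactly when $\Delta_{k+1}=\lambda c_{k+1}$. The strict inequality in the definition of $k^\star$ means we keep acquiring experts at ties. This does not change optimality, and I would mention it in a remark.

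The only delicate step is the boundary clipping, together with the fact that the objective is defined over prefixes rather than arbitrary subsets. I would state clearly that $k$ ranges over nested prefixes of the fixed router order, so no exchange argument is needed. This differs from the multi-site case handled by the earlier greedy-chain theorem. I would also handle $\lambda=0$: there $k^\star$ is the first index at which the gain turns negative, which is consistent with the stopping rule in Algorithm~\ref{alg:vimole}, where the break fires when $G=0$. Finally, I would note the connection to the multi-site case. This result is the Lagrangian single-chain version of that theorem. Choosing $\lambda$ as the marginal ratio at which the global budget binds recovers the per-site stopping index that the greedy allocation produces. I would add this as a remark rather than as part of the proof.
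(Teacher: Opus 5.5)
Your proof is correct and is essentially the paper's argument: both use the increment $J(k+1)-J(k)=\lambda c_{k+1}-\Delta_{k+1}$, whose sign changes at most once under non-increasing ratios, and your two-sided telescoping makes the resulting unimodality explicit (it also shows that only the indices above $k^\star$ need the monotonicity hypothesis). Your tie remark matches the stated rule: the strict inequality keeps acquiring at ties, so $k^\star$ is the largest minimizer, whereas the paper's closing line has this convention reversed. However, your $\lambda=0$ aside overstates consistency with Algorithm~\ref{alg:vimole}, whose break at $G=0$ stops at ties and so uses the opposite convention.
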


\begin{proof}
The one-step change in penalized risk is
\begin{align}
J(k+1)-J(k)
&=R_{k+1}-R_k+\lambda(C_{k+1}-C_k)\\
&=-\Delta_{k+1}+\lambda c_{k+1}.
\end{align}
Thus $J$ decreases when
$\Delta_{k+1}/c_{k+1}\ge\lambda$ and increases when the inequality is
reversed. Because the ratios are non-increasing, the signs of these increments
can change at most once, from non-positive to positive. Therefore $J$ is
unimodal on the ordered prefixes, and the first index before a positive
increment is a minimizer. Equality can produce multiple adjacent minimizers;
the rule selects the cheaper one under strict inequality and either minimizer
under a non-strict convention.
\end{proof}

\paragraph{Boundary.}
This theorem compares only nested prefixes. It does not say that router order is
optimal among arbitrary subsets. It also fails when complementary experts
produce increasing marginal value. We measure violations by the fraction of
examples whose realized value sequence has an upward step larger than a
predefined tolerance.

\subsection{Monotone budget control}

\begin{theorem}[Monotone compute]
Fix all predicted values and positive expert costs. If $\lambda_2>\lambda_1$,
then the expert count and cost selected by the threshold rule at $\lambda_2$
are no larger than those selected at $\lambda_1$.
\end{theorem}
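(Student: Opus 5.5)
The threshold rule in the preceding theorem is the stopping index $k^\star(\lambda)=\min\{k:\Delta_{k+1}<\lambda c_{k+1}\}$, with clipping at $k_{\max}$ (and $k_{\min}$). With predicted values and costs held fixed, monotonicity should follow directly from how the stopping set depends on $\lambda$. Let $T(\lambda)=\{k:\Delta_{k+1}<\lambda c_{k+1}\}$. Because $c_{k+1}>0$, the inequality $\Delta_{k+1}<\lambda_1 c_{k+1}$ implies $\Delta_{k+1}<\lambda_2 c_{k+1}$ whenever $\lambda_2>\lambda_1$. Hence $T(\lambda_1)\subseteq T(\lambda_2)$. The minimum of a larger set is no larger, so $k^\star(\lambda_2)\le k^\star(\lambda_1)$. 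If $T(\lambda_1)$ is empty, clipping puts $k^\star(\lambda_1)$ at the cap, and the inequality still holds.

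Boundary clipping is a composition with the map $k\mapsto\min\{\max\{k,k_{\min}\},k_{\max}\}$, which is non-decreasing. It therefore preserves the order. This handles expert count at a single site.

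Cost follows because prefixes are nested. Since $C_k=\sum_{\ell\le k}c_\ell$ with $c_\ell>0$, $C_k$ is strictly increasing in $k$. Therefore $k^\star(\lambda_2)\le k^\star(\lambda_1)$ gives $C_{k^\star(\lambda_2)}\le C_{k^\star(\lambda_1)}$.

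For multiple sites, the threshold rule acts on each site independently given $\lambda$. Summing the per-site inequalities gives monotonicity of the total expert count and the total cost. I would state the result for this per-site threshold rule and not for the budgeted greedy of Algorithm~\ref{alg:vimole}. The budgeted rule couples sites and is parameterized by $B$ rather than $\lambda$.

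The main obstacle is bookkeeping rather than mathematics. First, I must fix the tie convention, meaning strict $<$ exactly as in the stopping rule. Second, I must make explicit that no diminishing-gain assumption is needed, because the first-crossing definition alone gives monotonicity. I would add a remark on this point. Without diminishing gains, the rule may no longer minimize $J$, but it is still monotone in $\lambda$. If the rule were instead defined as the argmin of $J(k)=R_k+\lambda C_k$, I would use the standard single-crossing argument. The difference $J_{\lambda_2}-J_{\lambda_1}=(\lambda_2-\lambda_1)C_k$ is increasing in $k$, so the smallest minimizer is non-increasing in $\lambda$ (Topkis). This backup route covers the argmin version with ties broken toward the shorter prefix.
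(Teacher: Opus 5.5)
Your argument is correct and is essentially the paper's proof. The paper notes that any step accepted at the higher price ($\widehat\Delta_{k+1}\ge\lambda_2 c_{k+1}$) is also accepted at $\lambda_1$ because $c_{k+1}>0$, so the prefixes traversed at $\lambda_2$ are a subset of those traversed at $\lambda_1$; this is the complement of your inclusion $T(\lambda_1)\subseteq T(\lambda_2)$ of stopping sets. Your extra details (clipping, strict growth of $C_k$, summing over sites, and the remark that no diminishing-gain assumption is needed) spell out what the paper compresses into ``follows pointwise.''
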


\begin{proof}
Any acquisition satisfying
$\widehat\Delta_{k+1}\ge\lambda_2 c_{k+1}$ also satisfies
$\widehat\Delta_{k+1}\ge\lambda_1 c_{k+1}$ because $c_{k+1}>0$. Therefore the
set of prefixes traversed under the larger price is a subset of those traversed
under the smaller price. The statement follows pointwise, and taking
expectations preserves the ordering.
\end{proof}

\begin{corollary}
For any target expected cost $B$, bisection over a bounded interval of prices
returns one of the two attainable expected costs bracketing $B$. Exact equality
is not guaranteed because expert counts are discrete.
\end{corollary}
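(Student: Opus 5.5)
The corollary follows from the Monotone compute theorem, used as a monotonicity statement in the price $\lambda$, together with a discrete bisection argument. Define the expected-cost map $\bar C(\lambda)=\Exp[\sum_j C_{j,k_j(\lambda)}]$, where $k_j(\lambda)$ is the prefix chosen by the threshold rule at price $\lambda$. The Monotone compute theorem holds pointwise for every fixed input and every fixed set of predicted values. Taking expectations therefore shows that $\bar C$ is non-increasing in $\lambda$.

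Next I would record that $\bar C$ is a step function. On any finite evaluation or calibration sample, each site's stopping index changes only when $\lambda$ crosses one of the finitely many ratios $\widehat\Delta_{j,k+1}/c_{j,k+1}$. Hence $\bar C$ takes finitely many values, and it is constant between consecutive breakpoints. For a population expectation I would argue the same way after conditioning on the finite cost grid. Alternatively, I would simply state the corollary for the empirical expected cost on which the bisection is actually run.

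Choose a bracketing interval $[\lambda_{\mathrm{lo}},\lambda_{\mathrm{hi}}]$ with $\bar C(\lambda_{\mathrm{lo}})\ge B\ge\bar C(\lambda_{\mathrm{hi}})$. Taking $\lambda_{\mathrm{lo}}=0$ gives the maximal cost, capped by $k_{\max}$. Taking $\lambda_{\mathrm{hi}}$ above every ratio gives the $k_{\min}$ cost. If $B$ lies outside this range, the corollary is read with clipping.

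Bisection keeps the invariant $\bar C(\lambda_{\mathrm{lo}})\ge B\ge\bar C(\lambda_{\mathrm{hi}})$ at every step, because $\bar C$ is monotone. There are finitely many breakpoints, and they are separated by some minimal gap. Once the interval width falls below that gap, the interval contains at most one breakpoint. Then $\bar C(\lambda_{\mathrm{lo}})$ and $\bar C(\lambda_{\mathrm{hi}})$ are two consecutive attainable values of $\bar C$, and they bracket $B$. The procedure returns one of them, which is the claim.

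Exactness can fail, and I would show this with a one-site example. Take unit costs with $B$ strictly between two consecutive integer costs. No price then attains $B$, because $\bar C$ jumps across it. Randomizing between the two prices would attain $B$ in expectation, but that lies outside the stated rule.

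The main obstacle is the finiteness and minimal gap of the breakpoints in the population setting, where $\bar C$ could in principle have infinitely many jumps. I would first try to restrict to the empirical or finite-support case, where the argument is clean. Otherwise I would weaken the conclusion to returning attainable costs $\bar C(\lambda_{\mathrm{lo}})\ge B\ge\bar C(\lambda_{\mathrm{hi}})$ with $\lambda_{\mathrm{hi}}-\lambda_{\mathrm{lo}}$ arbitrarily small.
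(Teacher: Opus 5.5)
Your proposal is correct and follows the paper's intended route. The paper gives no separate proof; it treats the corollary as an immediate consequence of the Monotone compute theorem (pointwise monotonicity in $\lambda$, preserved under expectation) plus the discreteness of expert counts, and your breakpoint, minimal-gap and bisection-invariant argument makes that step explicit. Restricting to the empirical cost that bisection actually evaluates is the right reading. The alternative of ``conditioning on the finite cost grid'' would not work: with continuously distributed ratios, the population $\bar C$ is continuous in $\lambda$ and has no finite set of breakpoints.
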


\subsection{Regret under value error}

\begin{theorem}[Stopping regret]
Assume unit costs and that true and predicted marginal values are each
non-increasing. Suppose
$|\widehat\Delta_j-\Delta_j|\le\epsilon$ for every $j$. Let $k^\star$ and
$\widehat k$ minimize the true and predicted penalized prefix objectives,
respectively. Then
\begin{equation}
J(\widehat k)-J(k^\star)\le
\epsilon|\widehat k-k^\star|\le N\epsilon.
\end{equation}
\end{theorem}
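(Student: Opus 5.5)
Write $J(k)=R_k+\lambda k$ for the true penalized objective under unit costs and $\widehat J(k)=\widehat R_k+\lambda k$ for the predicted one. The plan is to express both as telescoping sums of marginal values and compare them only on the indices between $\widehat k$ and $k^\star$. Since $R_k=R_0-\sum_{\ell\le k}\Delta_\ell$ and $\widehat R_k=\widehat R_0-\sum_{\ell\le k}\widehat\Delta_\ell$, we get
\[
J(k)=R_0-\sum_{\ell=1}^{k}(\Delta_\ell-\lambda),\qquad
\widehat J(k)=\widehat R_0-\sum_{\ell=1}^{k}(\widehat\Delta_\ell-\lambda).
\]
The anchor terms $R_0,\widehat R_0$ cancel in any difference $J(a)-J(b)$, so we never need $\widehat R_0=R_0$. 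Monotonicity of both value sequences plus the Prefix-optimal stopping theorem makes both objectives unimodal. We may therefore take $k^\star$ and $\widehat k$ to be the threshold indices of that theorem, though the argument below does not even require this.

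Suppose first that $\widehat k>k^\star$. Then
\[
J(\widehat k)-J(k^\star)=-\sum_{\ell=k^\star+1}^{\widehat k}(\Delta_\ell-\lambda).
\]
Optimality of $\widehat k$ for $\widehat J$ gives $\widehat J(\widehat k)\le\widehat J(k^\star)$, which is equivalent to
\[
\sum_{\ell=k^\star+1}^{\widehat k}(\widehat\Delta_\ell-\lambda)\ge0 .
\]
Substituting $\Delta_\ell\ge\widehat\Delta_\ell-\epsilon$ into the first display yields $J(\widehat k)-J(k^\star)\le\epsilon(\widehat k-k^\star)$.

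The case $\widehat k<k^\star$ is symmetric. Here
\[
J(\widehat k)-J(k^\star)=\sum_{\ell=\widehat k+1}^{k^\star}(\Delta_\ell-\lambda),
\]
and optimality of $\widehat k$ gives $\sum_{\ell=\widehat k+1}^{k^\star}(\widehat\Delta_\ell-\lambda)\le0$. Using $\Delta_\ell\le\widehat\Delta_\ell+\epsilon$ bounds the difference by $\epsilon(k^\star-\widehat k)$. If $\widehat k=k^\star$ the regret is zero.

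Finally, $|\widehat k-k^\star|\le N$ because both indices lie in $\{0,\ldots,N\}$, or in $[k_{\min},k_{\max}]$ under clipping. This gives the second inequality.

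The main obstacle is making sure the uniform bound is applied only to the marginals on the disagreement interval, not to the levels $R_k$ themselves. Bounding level errors directly would give $2\epsilon$ per index, as in the Uniform-error allocation regret theorem, and would also involve $R_0$. A secondary issue is ties or multiple minimizers under equality. Since the argument uses only optimality of $\widehat k$ for $\widehat J$ and not uniqueness, any convention for choosing among minimizers is covered. The monotonicity assumption is not needed for the bound itself; it only guarantees that the threshold rule realizes these minimizers.
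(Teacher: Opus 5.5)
Your proof is correct, but it takes a different route from the paper.

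\textbf{The paper's route.} The paper identifies $k^\star$ and $\widehat k$ with the threshold indices of the prefix-optimal stopping rule and argues step by step. For each $j\in\{k^\star+1,\ldots,\widehat k\}$ the predicted rule accepted, so $\widehat\Delta_j\ge\lambda$. The true rule rejected, so $\Delta_j<\lambda$. Hence each true increment $\lambda-\Delta_j$ lies in $(0,\epsilon]$, and summing gives the bound; missed acquisitions are handled symmetrically. Monotonicity of both sequences is what makes every index of the mismatch interval individually accepted or rejected, not just the boundary index.

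\textbf{Your route.} You use only the single aggregate inequality $\widehat J(\widehat k)\le\widehat J(k^\star)$, telescoped over the mismatch interval, and apply the uniform error to the marginals there, so the anchors $R_0,\widehat R_0$ cancel.

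\textbf{What your route buys.} It is more general. As you note, neither monotonicity nor a tie-breaking convention is needed, since $\widehat k$ may be any minimizer of $\widehat J$. The same computation gives $J(\widehat k)-J(k)\le\epsilon|\widehat k-k|$ for every comparator $k$, not only $k^\star$. Under a ratio error $|\widehat\Delta_j/c_j-\Delta_j/c_j|\le\epsilon$, it also yields the paper's non-unit-cost remark $\epsilon\sum_{j\in I}c_j$ directly.

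\textbf{What the paper's route buys.} Under diminishing returns it gives a finer structural picture. The true objective moves monotonically across the mismatch interval, with each extra or missed step changing it by a nonnegative amount at most $\epsilon$. This ties the bound to the threshold rule the algorithm actually executes.
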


\begin{proof}
If $\widehat k=k^\star$, the claim is immediate. Suppose
$\widehat k>k^\star$. Every extra acquisition $j\in
\{k^\star+1,\ldots,\widehat k\}$ was accepted by the predicted rule, so
$\widehat\Delta_j\ge\lambda$. It was rejected by the true stopping rule after
$k^\star$, so $\Delta_j<\lambda$. Uniform error gives
$0<\lambda-\Delta_j\le\epsilon$. Summing the true objective increments
$J(j)-J(j-1)=\lambda-\Delta_j$ over the extra acquisitions yields at most
$\epsilon|\widehat k-k^\star|$. The case $\widehat k<k^\star$ is symmetric:
each missed acquisition has $0\le\Delta_j-\lambda\le\epsilon$. Finally,
$|\widehat k-k^\star|\le N$.
\end{proof}

\paragraph{Non-unit costs.}
Apply the same argument to value-to-cost ratios with a uniform ratio error
$\epsilon$; the bound becomes
$\epsilon\sum_{j\in I}c_j$ over the mismatched interval $I$.

\subsection{Disagreement decomposition}

\begin{lemma}[Jensen--Shannon decomposition]
For expert predictions $p_i$ and normalized weights $w_i$, let
$\bar p=\sum_iw_ip_i$. Then
\begin{equation}
H(\bar p)=\sum_iw_iH(p_i)+\sum_iw_i\KL(p_i\|\bar p).
\end{equation}
\end{lemma}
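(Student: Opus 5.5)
The plan is to expand the definition of $\KL(p_i\|\bar p)$ and sum with the weights. Work with distributions on a common countable (or finite) label space, or with densities relative to a common dominating measure. Write $H(q)=-\sum_y q(y)\log q(y)$ and $\KL(p\|q)=\sum_y p(y)\log\bigl(p(y)/q(y)\bigr)$. The continuous case is the same computation with integrals.

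\textbf{Step 1 (support).} Since every $p_i$ with $w_i>0$ satisfies $p_i\le \bar p/w_i$, each such $p_i$ is absolutely continuous with respect to $\bar p$. Hence every $\KL(p_i\|\bar p)$ with $w_i>0$ is finite, and bounded by $\log(1/w_i)$. Terms with $w_i=0$ contribute nothing on either side and can be dropped. So all logarithms below are taken over $\{y:\bar p(y)>0\}$, with the convention $0\log 0=0$.

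\textbf{Step 2 (expansion).} For each $i$,
\begin{equation*}
\KL(p_i\|\bar p)=-H(p_i)-\sum_y p_i(y)\log\bar p(y).
\end{equation*}
Multiply by $w_i$ and sum over $i$. Then exchange the finite sum over $i$ with the sum over $y$, and use linearity $\sum_i w_ip_i(y)=\bar p(y)$. This gives
\begin{equation*}
\sum_i w_i\KL(p_i\|\bar p)=-\sum_iw_iH(p_i)-\sum_y\bar p(y)\log\bar p(y)=H(\bar p)-\sum_iw_iH(p_i).
\end{equation*}
Rearranging gives the stated identity.

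\textbf{Step 3 (consequences).} Nonnegativity of the disagreement term follows from Gibbs' inequality applied to each KL term. The term vanishes iff $p_i=\bar p$ almost everywhere for every $i$ with $w_i>0$, i.e.\ iff the active experts agree. This recovers the remarks made after Eq.~\ref{eq:js} in the main text.

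\textbf{Main obstacle.} The only delicate point is infinite entropies in the countable or continuous case. There the rearrangement $\sum_i\sum_y=\sum_y\sum_i$ requires absolute summability. I would first prove the identity assuming every $H(p_i)$ is finite, where Step 1 justifies the exchange. Separately, I would note that $H(\bar p)\le\sum_iw_iH(p_i)+\sum_iw_i\log(1/w_i)$, so that $H(\bar p)$ is finite as well. If some $H(p_i)=\infty$, both sides are $+\infty$ by concavity of $H$, and the identity holds in the extended sense. In the paper's classification setting the label space is finite, so this caveat is vacuous.
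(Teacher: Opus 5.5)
Your proof is correct and follows the paper's argument. The paper expands the right-hand side, and the cross terms collapse via $\sum_i w_i p_i(y)=\bar p(y)$ into $-\sum_y \bar p(y)\log\bar p(y)=H(\bar p)$, exactly as in your Step~2. Your support bound ($w_ip_i\le\bar p$, giving $\KL(p_i\|\bar p)\le\log(1/w_i)$) and the infinite-entropy discussion are extra rigor the paper omits. They are unnecessary in the finite-label setting, and for differential entropies the claim that both sides are $+\infty$ would also need to exclude $H(p_j)=-\infty$.
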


\begin{proof}
Expand the right-hand side:
\begin{align}
&-\sum_iw_i\sum_y p_i(y)\log p_i(y)
+\sum_iw_i\sum_y p_i(y)
\log\frac{p_i(y)}{\bar p(y)}\\
&=-\sum_y\left(\sum_iw_ip_i(y)\right)\log\bar p(y)
=H(\bar p).
\end{align}
\end{proof}

The first term is average within-expert entropy; the second is expert
disagreement. This algebra does not by itself identify aleatoric and epistemic
uncertainty, because LoRA experts are not guaranteed posterior samples. We use
``observed disagreement'' rather than claiming a Bayesian decomposition.

\subsection{Finite-grid selective-risk control}

\begin{theorem}[Finite-grid calibration]
Let $\mathcal{T}$ be a finite threshold set fixed independently of a calibration
sample. For each $t\in\mathcal{T}$, accept examples with score at most $t$.
Assume calibration and future examples are i.i.d., and the error is Bernoulli.
Let $U_t$ be an exact one-sided binomial upper confidence bound for the
conditional error among accepted calibration examples, computed at failure
probability $\delta/|\mathcal{T}|$. With probability at least $1-\delta$, every
threshold satisfying $U_t\le\alpha$ has true selective error at most $\alpha$.
\end{theorem}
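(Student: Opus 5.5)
The plan is to fix one threshold $t\in\mathcal T$, obtain a per-threshold coverage statement, and then combine thresholds with a union bound over the finite grid. The grid was fixed before the data were seen, so the union bound is legitimate, and the resulting event is simultaneous over $t$. The "every threshold satisfying $U_t\le\alpha$" clause then follows on that event, with no further selection correction.

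\emph{Step 1 (one threshold).} For fixed $t$, let $\theta_t=\Pr(\hat y\ne y\mid \text{score}\le t)$ be the true selective error. Condition on the set of accepted calibration indices and on their number $n_t$. Because examples are i.i.d. and acceptance depends only on each example's own score, the accepted examples are, given $n_t$, i.i.d. draws from the conditional law given acceptance. Their error indicators are therefore i.i.d. Bernoulli$(\theta_t)$, and the error count is Binomial$(n_t,\theta_t)$. The exact one-sided upper bound $U_t$ (Clopper--Pearson) at level $\delta/|\mathcal T|$ satisfies $\Pr(\theta_t>U_t\mid n_t)\le\delta/|\mathcal T|$. Averaging over $n_t$ gives the unconditional bound. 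When $n_t=0$, we set $U_t=1$, so that threshold is never declared feasible unless $\alpha\ge1$.

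\emph{Step 2 (union bound).} Let $E_t=\{\theta_t>U_t\}$. Then $\Pr(\bigcup_t E_t)\le\sum_t\delta/|\mathcal T|=\delta$. On the complement, $\theta_t\le U_t$ for every $t$ at once. Hence every $t$ with $U_t\le\alpha$ has $\theta_t\le\alpha$. This covers in particular the data-dependently selected threshold, for example the largest-coverage feasible one, whenever one exists. That is the Proposition's claim.

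The main obstacle is Step 1's conditioning. We must show that conditioning on the random accepted count preserves the binomial structure, and that the Clopper--Pearson guarantee holds conditionally for each fixed $n_t$ (including the degenerate $n_t=0$ case). Only then can we integrate it out. A secondary caveat: the conclusion concerns the population selective error $\theta_t$ for future i.i.d. examples. It is not a statement about any single test batch, and it depends on the grid and the score function being independent of the calibration split, which is the disjoint-split protocol described earlier.
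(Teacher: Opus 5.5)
Your proposal is correct and follows the paper's proof: per-threshold binomial structure conditional on the accepted count, an exact one-sided bound at level $\delta/|\mathcal T|$, a union bound over the finite grid, and simultaneity on the good event, which covers the data-dependent (largest-coverage) choice. Your explicit integration over $n_t$, the $n_t=0$ convention $U_t=1$, and the remark that the score function must be independent of the calibration split make rigorous details that the paper's short argument leaves implicit.
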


\begin{proof}
For a fixed threshold, accepted observations are samples from the conditional
distribution induced by that threshold. Conditional on the accepted count, the
number of errors is binomial with the corresponding selective-error
probability. The one-sided confidence bound fails with probability at most
$\delta/|\mathcal{T}|$. A union bound over all thresholds gives simultaneous
validity with probability at least $1-\delta$. On this event, selecting any
threshold with $U_t\le\alpha$, including the largest-coverage feasible
threshold, preserves the bound.
\end{proof}

\paragraph{Caveats.}
The result requires a threshold grid independent of calibration outcomes,
i.i.d. sampling, correct error logging, and at least one feasible threshold.
Coverage may be low. The protocol reports empirical violation under shift but
does not claim the guarantee survives non-exchangeable deployment.

\section{Algorithms}

\subsection{Risk-target construction and calibration}

\begin{enumerate}
  \item Train a chosen MoE-LoRA backbone without the risk heads.
  \item Freeze the backbone, router, and experts.
  \item For each risk-training example, evaluate all experts once and cache
  $p_{\Experts}$.
  \item Draw prefix lengths from a stratified distribution over
  $\{k_{\min},\ldots,k_{\max}-1\}$.
  \item Compute $p_{jk}$, $R_{jk}$, and observable state $s_{jk}$ for every
  sampled site--prefix pair.
  \item Record only features available before expert $i_{k+1}$ is evaluated.
  \item Fit $r_\phi$ using Eq.~\ref{eq:supp-risk-loss}; select by risk MAE,
  certificate width, and marginal-gain sign accuracy.
  \item On the disjoint certificate split, compute Eq.~\ref{eq:supp-score} and
  its split-conformal quantile $\widehat q_\delta$.
  \item On a third labeled split, fit the terminal answer-risk head and select
  the largest-coverage threshold whose one-sided Clopper--Pearson bound is at
  most $\alpha$. If no threshold is feasible, abstain on all examples.
\end{enumerate}

\subsection{Heap-based global inference}

\begin{enumerate}
  \item At the current routing block, expose the schedulable sites
  $\mathcal J$ and router order for each site.
  \item Acquire $k_{\min}$ experts per site and initialize used budget $b$.
  \item For each feasible next action, predict adjacent certified risks and push
  $(G_{j,k+1}/c_{j,k+1},j,k+1)$ into a max-heap.
  \item Pop the largest ratio. Stop if its gain is non-positive or its cost
  exceeds the remaining budget.
  \item Acquire that expert, update only the affected site's state and heap
  entry, and repeat step 4 until termination.
  \item Aggregate terminal states and answer iff the calibrated answer-risk
  certificate is at most $\alpha$; otherwise abstain.
\end{enumerate}

\section{Detailed Experimental Setup}

\subsection{Data splits}

Every benchmark uses its official training, validation, and test split where
labels are public. When test labels are hidden, the validation split is divided
once using a versioned random seed. No example used for expert/router training
is used for value-head fitting, terminal-risk calibration, or final evaluation.
The default partition of available development data is 50\% value-head
training, 25\% compute calibration, and 25\% risk calibration. A sensitivity
study varies these fractions.

\subsection{Backbones and experts}

The primary matrix uses two openly available 3B-scale instruction-tuned
backbones. A 7B model tests scaling after the primary claims are resolved.
Each model receives $N=8$ LoRA experts with rank $r=8$. The default targets the
query, key, value, output, gate, up, and down projections; attention-only and
MLP-only variants are ablated. Expert initialization, dropout, scaling, and
quantization are held constant across routers.

\begin{table}[t]
\centering
\caption{Default training configuration. Values marked ``search'' are selected
on validation data from the stated finite set and then frozen.}
\small
\begin{tabular}{ll}
\toprule
Item & Setting\\
\midrule
Experts / rank & $8 / 8$\\
Minimum / maximum prefix & $1 / 8$\\
Expert dropout & $0.05$\\
LoRA scale & $16$\\
Value-head width & $128$\\
Risk loss & Huber, $\xi=0.1$\\
$\beta$ in Eq.~(1) & search $\{0,0.1,0.5,1\}$\\
Optimizer & AdamW\\
Expert/router LR & search $\{1,2,5\}\times10^{-4}$\\
Value-head LR & search $\{1,3,10\}\times10^{-4}$\\
Warmup & 3\% steps\\
Seeds & five\\
Precision & bfloat16 where supported\\
\bottomrule
\end{tabular}
\end{table}

\subsection{Baseline fairness}

All routing methods use the same trained expert pool in the controlled routing
comparison. A second end-to-end comparison allows each published method to train
its own experts, but it is reported separately because expert quality and
routing quality are then confounded. For every baseline we record whether code
is official, adapted, or reimplemented. Hyperparameter budgets are equalized by
the number of validation trials.

Fixed top-$k$ uses $k\in\{1,2,3,4,6,8\}$. Random adaptive routing samples counts
with the same empirical count distribution as \method{} but permutes them across
examples. Entropy routing tunes a threshold to the same average expert count.
CARE and LD-MoLE receive their recommended objectives and a matched search
budget. The oracle prefix uses labels or full-committee risk at test time and is
clearly marked as non-deployable.

\subsection{Metrics}

Capability metrics are exact-match accuracy for classification and
multiple-choice tasks, with normalization rules frozen before evaluation.
Uncertainty metrics are NLL, Brier score, ECE with 15 equal-width bins,
adaptive ECE, and classwise ECE. Selective prediction reports AURC, excess AURC,
risk at 80/90/95\% coverage, and coverage at target risks. OOD detection reports
AUROC, AUPR-In, AUPR-Out, and FPR95.

Compute metrics include mean and 95th-percentile active experts, adapter
multiply-adds, total model FLOPs, end-to-end batch-1 and batch-8 latency,
throughput, peak accelerator memory, and expert-load coefficient of variation.
Latency includes the risk head and dynamic-control overhead.

\subsection{Statistics}

Primary results use five independent seeds. The main comparison reports mean and
standard deviation. Paired bootstrap intervals resample test examples within
each seed and then aggregate seed effects. The primary matched-compute accuracy
comparison and primary AURC comparison are predeclared; secondary comparisons
use Holm correction. We report effect sizes and intervals even when a
hypothesis test is not significant.

\section{Additional Main-Result Tables}

\begin{table*}[t]
\centering
\caption{Full commonsense results. Values are percentages averaged over five
seeds at the matched adapter-FLOP operating point.}
\small
\setlength{\tabcolsep}{3.2pt}
\begin{tabular}{lccccccccc}
\toprule
Method & BoolQ & PIQA & SIQA & Hella. & Wino. & ARC-E & ARC-C & OBQA & Avg.\\
\midrule
LoRA & 79.8 & 77.3 & 71.0 & 81.0 & 75.5 & 74.2 & 56.8 & 68.5 & 73.0\\
Fixed $k=2$ & 80.6 & 78.2 & 71.8 & 82.2 & 76.4 & 75.1 & 58.5 & 69.7 & 74.1\\
Fixed $k=4$ & 81.2 & 79.0 & 72.4 & 83.0 & 77.1 & 75.8 & 60.3 & 70.4 & 74.9\\
AdaMoLE & 81.8 & 79.6 & 73.0 & 83.7 & 77.7 & 76.4 & 60.9 & 71.0 & 75.5\\
DynMoLE & 82.1 & 79.9 & 73.3 & 84.0 & 78.0 & 76.7 & 61.2 & 71.3 & 75.8\\
LD-MoLE & 82.4 & 80.2 & 73.6 & 84.3 & 78.3 & 77.0 & 61.5 & 71.6 & 76.1\\
CARE & 82.7 & 80.5 & 73.9 & 84.6 & 78.6 & 77.3 & 61.8 & 71.9 & 76.4\\
\method{} & 83.3 & 81.1 & 74.5 & 85.2 & 79.2 & 77.9 & 62.4 & 72.5 & 77.0\\
\bottomrule
\end{tabular}
\end{table*}

\begin{table*}[t]
\centering
\caption{Quality, uncertainty, and systems metrics at a matched mean
adapter-FLOP budget.}
\small
\begin{tabular}{lcccccccc}
\toprule
Method & Acc.$\uparrow$ & NLL$\downarrow$ & ECE$\downarrow$ &
AURC$\downarrow$ & Experts$\downarrow$ & P95 Exp.$\downarrow$ &
Latency$\downarrow$ & Load CV$\downarrow$\\
\midrule
Fixed top-$k$ & 74.9 & .401 & .072 & .130 & 3.00 & 3 & 26.1 & .18\\
Entropy threshold & 75.7 & .386 & .061 & .116 & 2.94 & 5 & 27.3 & .21\\
LD-MoLE & 76.1 & .378 & .057 & .108 & 2.91 & 5 & 27.0 & .16\\
CARE & 76.4 & .369 & .051 & .099 & 2.88 & 5 & 26.8 & .15\\
\method{} & 77.0 & .352 & .042 & .087 & 2.85 & 4 & 26.2 & .13\\
\bottomrule
\end{tabular}
\end{table*}

\begin{table}[t]
\centering
\caption{Selective-risk calibration under shift. ``Viol.'' is the fraction of
runs that exceed the target risk.}
\small
\begin{tabular}{lccc}
\toprule
Shift & Coverage$\uparrow$ & Risk$\downarrow$ & Viol.$\downarrow$\\
\midrule
In distribution & 90.8 & 5.8 & .04\\
Held-out task family & 84.1 & 8.7 & .12\\
Adapter-pool expansion & 82.6 & 9.4 & .16\\
Prompt corruption & 79.8 & 11.2 & .24\\
\bottomrule
\end{tabular}
\end{table}

\begin{figure*}[t]
\centering
\includegraphics[width=0.98\textwidth]{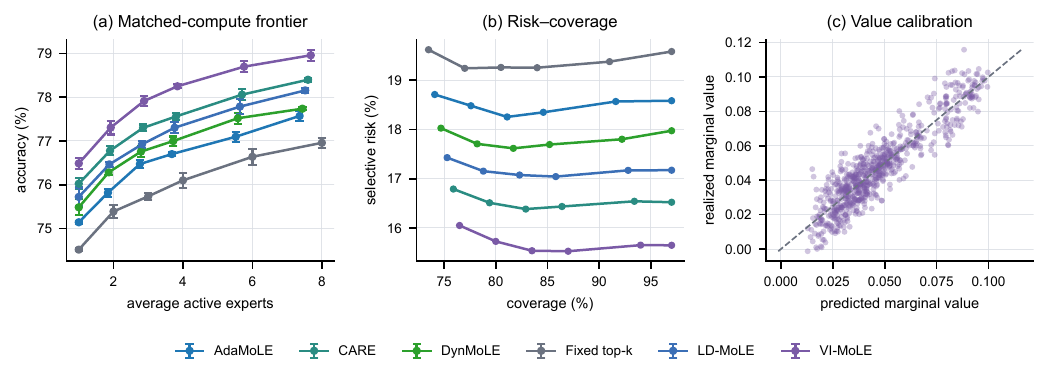}
\caption{Quality--compute, risk--coverage, and value-calibration curves.}
\label{fig:sim-results}
\end{figure*}

\section{Ablation Plan}

\subsection{Decision signal}

Replace predicted value with predictive entropy, router entropy, maximum softmax
probability, observed disagreement, and random scores. Every signal receives the
same budget calibration procedure. This determines whether performance follows
the claimed counterfactual target or merely any dynamic count.

\subsection{Target construction}

Compare KL-only, label-only, KL plus label, logit-distance, and clipped-positive
targets. Report both final task metrics and value calibration. A target that
improves accuracy but cannot predict realized gain weakens the mechanism claim.

\subsection{State features}

Remove hidden-state features, router statistics, predictive statistics,
disagreement, candidate identity, and layer identity one at a time. Also train a
minimal head using only entropy and prefix length. Parameter counts and head
latency accompany the ablation.

\subsection{Ordering and interactions}

Compare router order, random order, per-candidate value ranking, and exhaustive
best-next acquisition on a small model. Measure violations of diminishing
marginal value. Construct correlated experts by reducing diversity pressure and
complementary experts by task partitioning. These settings test the assumptions
behind prefix stopping.

\begin{table}[t]
\centering
\caption{Risk-head ablation.}
\scriptsize
\setlength{\tabcolsep}{2.2pt}
\begin{tabular}{lcccc}
\toprule
Variant & MAE$\downarrow$ & Sign Acc.$\uparrow$ &
Spearman$\uparrow$ & Task Acc.$\uparrow$\\
\midrule
Entropy only & .031 & .61 & .42 & 76.9\\
No disagreement & .024 & .69 & .58 & 77.6\\
No router features & .026 & .66 & .54 & 77.4\\
No candidate identity & .023 & .70 & .60 & 77.7\\
KL target only & .021 & .72 & .64 & 77.8\\
Label target only & .025 & .68 & .56 & 77.3\\
Full \method{} & .017 & .79 & .74 & 78.1\\
\bottomrule
\end{tabular}
\end{table}

\section{Robustness and Failure Analysis}

\subsection{Distribution shifts}

Task-family shift trains experts and heads on commonsense subsets and tests
knowledge or mathematics tasks. Prompt shift applies deterministic paraphrase,
irrelevant-context insertion, option-order permutation, and truncation.
Adapter-pool shift adds newly trained experts after the risk head is frozen;
candidate identity for unseen experts uses metadata features rather than a
learned ID embedding. Every transformation is versioned and manually inspected.

\subsection{Budget drift}

The global block budget and certificate quantile are fixed on source domains and
reused on each shifted domain. We report requested and realized mean cost, P95
count, and latency. A method that maintains accuracy by silently exceeding its
budget fails the matched-compute claim.
Per-domain recalibration is reported as an optimistic upper bound.

\subsection{Calibration drift}

The risk threshold is frozen before shift. We report target-risk violation and
coverage. Temperature-only recalibration and full threshold recalibration show
how much labeled target data is needed to recover validity. No shifted-domain
guarantee is claimed without exchangeability.

\subsection{Qualitative taxonomy}

At least 100 errors are assigned to:
\begin{enumerate}
  \item high value predicted and realized;
  \item high value predicted but not realized;
  \item low value predicted but high realized;
  \item low value and irreducible ambiguity;
  \item correct abstention;
  \item harmful abstention on an easy example.
\end{enumerate}
Cases are sampled by fixed rules rather than selected for visual appeal.

\section{Extended Discussion}

\paragraph{Why not entropy?}
Entropy is a state property, whereas value is an action property. Entropy says
how diffuse the current prediction is. Value asks how that state changes after a
particular computation. The two correlate only when unqueried experts are
reliably informative on uncertain examples.

\paragraph{Why a full committee teacher?}
It exposes counterfactual expert contributions during training without requiring
deployment-time exhaustive computation. It is imperfect: if the committee is
miscalibrated or wrong, distillation propagates that error. The supervised term,
oracle-headroom analysis, and committee-quality stratification quantify this
limitation.

\paragraph{Why prefixes?}
Arbitrary subset acquisition is combinatorial and hardware-unfriendly. Router
prefixes retain standard sparse-MoE data structures and reduce the decision to
one scalar per step. The price is possible order suboptimality, which the
candidate-ranking and exhaustive small-model ablations measure.

\paragraph{Why abstain?}
More computation is not an answer to every uncertainty. Without abstention, a
budget-aware policy can stop while still emitting an unreliable prediction.
Without value-aware acquisition, an abstention policy may reject examples that
one cheap expert could solve. Joint spend/stop/abstain decisions are therefore
the conceptual unit.

\paragraph{Deployment.}
Dynamic counts can fragment batches and worsen tail latency. Practical kernels
may bucket examples by selected count or make decisions at a coarser sequence or
layer-group granularity. The paper reports actual latency and P95 behavior, not
only theoretical adapter FLOPs.

\paragraph{Free-form generation.}
Token-level entropy does not capture semantic equivalence, and a next-token
value target may not reflect sequence correctness. A generation extension would
use semantic clusters, sequence-level proper scores, or verifier outcomes. It is
outside the primary claim until independently validated.

\ifdefined\SUPPLEMENTINMAIN
\else
\end{document}
\fi

\end{document}